\documentclass[journal,10pt]{IEEEtran}
\usepackage{graphicx}
\usepackage[colorlinks,linkcolor=black,anchorcolor=black,citecolor=black]{hyperref}
\usepackage{lineno,hyperref}
\modulolinenumbers[5]
\usepackage[cmex10]{amsmath}
\usepackage{amsfonts}

\usepackage{color}
\usepackage{algorithm}
\usepackage{algorithmic}
\usepackage{array}
\usepackage{multirow}
\usepackage{booktabs}
\usepackage{bm}

\allowdisplaybreaks[4]
\usepackage{amsmath}
\usepackage{underscore}
\usepackage{stfloats}
\usepackage{float}
\usepackage{lipsum}
\usepackage{cite}
\usepackage{subfigure}
\usepackage{adjustbox}
\usepackage{amsthm}
\usepackage{xcolor}
\usepackage{makecell}

\newtheorem{theorem}{Theorem}
\newtheorem{definition}{Definition}

\makeatletter

\usepackage{amssymb}
\usepackage{amsmath}
\begin{document}

\title{World Model-Enabled Causal Digital Twins for Semantic Communications in Physical AI Systems}

\author{Lingyi Wang,~\IEEEmembership{Graduate Student Member,~IEEE,}
Tingyu Shui,~\IEEEmembership{Graduate Student Member,~IEEE,}\\
Walid Saad,~\IEEEmembership{Fellow,~IEEE,}
and Pascal Adjakple

% \thanks{This research was supported by the U_t.S. National Science Foundation under Grant CNS-2225511.}
% , the National Natural Science Foundation of China under Grant 62271267 
% and the open research fund of National Mobile Communications Research Laboratory, Southeast University under Grant 2024D16.}
\thanks{Lingyi Wang, Tingyu Shui, and Walid Saad are with the Bradley Department of Electrical and Computer Engineering, Virginia Tech, Alexandria, VA, 22305, USA.
(emails: \{lingyiwang, tygrady, walids\}@vt.edu).}
\thanks{
Pascal Adjakple is with the Research and Innovation Labs, InterDigital Communications Inc., New York, NY 10120, USA
(email: Pascal.Adjakple@InterDigital.com).}
}

\maketitle

\vspace{-1cm}
\begin{abstract} 
Semantic communication has emerged as a promising paradigm for enabling goal-oriented networking. However, most existing semantic communication solutions are tailored to one-shot tasks and optimize instantaneous performance. Hence, they cannot be used to support closed-loop dynamic systems with physical artificial intelligence (AI), in which the transmitted semantics affect not only the current inference outcome but also future control actions, state evolution, and ultimately long-horizon task performance. To address this gap, this paper investigates goal-oriented semantic communications for physical AI systems with closed-loop sensing-communication-inference-control. In particular, the problem of semantic communications is formulated as a long-term return-per-bit maximization under wireless bit-budget constraints while capturing both control efficiency and communication efficiency. To solve this problem, a novel causal information value (CIV) metric is introduced to evaluate the marginal contribution of each semantic token to the expected long-term return by transmission interventions. Then, a world-model-enabled causal digital twin (WM-CDT) framework is proposed to capture the dynamics of closed-loop physical AI systems and enable counterfactual reasoning for long-horizon imagined rollouts. Based on these imagined rollouts, an actor-critic policy is trained for long-horizon agent control with high data efficiency, while the semantic token selector is trained through CIV-per-bit evaluation. Extensive simulations on an AirSim-Sionna-based unmanned aerial vehicle (UAV) navigation simulator with realistic wireless channel modeling show that the proposed WM-CDT framework achieves significant improvement in return-per-kbit and navigation success rate compared to existing reinforcement learning solutions. Moreover, CIV is shown to provide a better estimation of the true long-horizon return gain compared to existing information value metrics.
\end{abstract}

% keywords
\begin{IEEEkeywords}
Goal-oriented semantic communications, physical AI systems, world model, counterfactual reasoning.
\end{IEEEkeywords}

\IEEEpeerreviewmaketitle

% \begin{figure*}
%     \centering
%     \includegraphics[width=1\linewidth]{framework.png}
%     \vspace{-0.9cm}
%     \caption{The framework of the proposed IV-driven semantic communications.}
%     \vspace{-0.4cm}
%     \label{fig:framework}
% \end{figure*}

\section{Introduction}
Recent advances in embodied and physical artificial intelligence (AI) are driving wireless networks toward integrated sensing, communication, inference, and control architectures~\cite{saad2024artificial,11016266,wang2025bridging}. In such systems, physical agents, such as unmanned aerial vehicles (UAVs), autonomous vehicles, and mobile robots, interact with dynamic environments, other agents, and edge servers through sequential sensing, communication, and control decisions~\cite{wang2025bridging}. The networked deployment of such physical AI agents creates a fundamental challenge for wireless system design, as the network must support not only reliable information delivery, but also timely and task-relevant interactions with the physical world over extended horizons. Semantic communication~\cite{a2,a1,a3} provides a promising approach for addressing this challenge by shifting wireless system design beyond reliable bit delivery toward task-oriented information exchange in which the goal is not to reproduce the source signal itself, but to convey the information that is useful for accomplishing a downstream task. This property is particularly important for closed-loop physical AI systems, where high-dimensional sensory observations must be converted into compact and task-relevant representations that not only support timely edge inference and control, but also influence future state evolution and long-horizon task performance over bandwidth-constrained wireless links~\cite{a10}. Deploying goal-oriented semantic communication for closed-loop physical AI systems yields two fundamental challenges: (a) how to evaluate the value of semantic information, and (b) how to learn and optimize long-horizon task performance under communication constraints. 

For addressing Challenge (a), existing importance-aware semantic communication solutions~\cite{a12} usually rely on feature importance scores or instantaneous task losses. Although such metrics are useful for one-step inference, in the context of physical AI, they cannot capture the delayed impact of semantic transmission on future control actions, state evolution, and long-horizon task utility. Traditional information-centric metrics, such as age of information (AoI)~\cite{r9} and value of information (VoI)~\cite{r10,r11}, measure information freshness or relevance, but they do not fully characterize the closed-loop coupling among sensing, communication, inference, and control. In particular, the observed correlation between transmitted semantics and high task return does not necessarily represent the true marginal contribution of the transmitted information, since the transmission decision itself depends on the system history. Hence, instead of only observing whether the transmitted semantic information is correlated with a high task return, we need to ask a \emph{counterfactual} question: given the same past system history, what would the future trajectory and long-term return have been if this semantic information had not been transmitted? By using the concept of \emph{counterfactual reasoning} that compares this hypothetical trajectory with the actual one in which the semantic information is transmitted, the system can isolate the closed-loop causal effect of that information on future belief updates, control actions, state evolution, and long-horizon task performance. This motivates defining a causal long-horizon information value metric that goes beyond statistical association.

When it comes to Challenge (b), despite the success of traditional mathematical optimization approaches~\cite{11164706,10943268,10974474,11069265}, such as dynamic programming~\cite{11164706}, Lyapunov optimization~\cite{10943268}, and model predictive control~\cite{11069265}, their direct deployment in closed-loop physical AI systems is challenging due to high-dimensional state spaces, uncertain and time-varying dynamics, and the need for long-horizon task-aware decision making. To alleviate these challenges, there has been a recent surge of reinforcement learning (RL)-driven solutions~\cite{10758370,10744542,10915558,10077432}, which can learn directly from sensory data and wireless data, and adapt to dynamic environments without predefined models. However, existing RL-based approaches~\cite{10758370,10744542,10915558,10077432} are not suitable for closed-loop physical AI systems. First, RL relies on trial-and-error interactions, which are data-inefficient and may be unsafe in physical AI systems. Second, RL-based approaches mainly optimize policies from current observations, rather than explicitly reasoning over future wireless and physical states. Hence, they cannot fully address the long-horizon optimization requirements of closed-loop physical AI systems. These limitations call for a new learning framework that allows a wireless network to support physical AI by endowing it with the ability to model closed-loop dynamics, evaluate counterfactual semantic value, and plan over future system evolutions under communication constraints.

The main contribution of this paper is to address the above challenges by developing a novel world-model-enabled causal digital twin (WM-CDT) framework that enables goal-oriented semantic communications in closed-loop physical AI systems. In the proposed framework, the semantic communication problem is formulated as a long-term return-per-bit maximization problem that jointly captures control efficiency and communication efficiency. To quantify the long-horizon utility of transmitted semantics, we introduce a causal information value (CIV) metric that evaluates the causal effect of semantic tokens on the expected long-term return through transmission interventions. We then develop a WM-CDT to learn the closed-loop dynamics of physical AI systems, support counterfactual reasoning for CIV estimation, and enable long-horizon action planning. In summary, our key contributions include:
\vspace{-0.15cm}
\begin{itemize}
\item We study goal-oriented semantic communications for closed-loop physical AI systems with integrated sensing, communication, inference, and control. Semantic communication allows the proposed system to convey compact task-related information for optimizing long-horizon system performance. To capture the tradeoff between long-horizon task performance and communication efficiency, we formulate the semantic communication problem as a return-per-bit maximization approach that jointly optimizes semantic token selection and agent control.

    \item We introduce a novel CIV metric to capture long-horizon causality between transmitted semantic tokens and long-horizon task returns in closed-loop physical AI systems. Particularly, CIV quantifies the marginal long-horizon contribution of each semantic token by comparing the expected returns under two counterfactual interventions that share the same system history but differ only in whether the token is transmitted.
	
	\item We develop a WM-CDT framework to solve the return-per-bit maximization problem over extended horizons. At the edge server, the digital twin learns the closed-loop system dynamics from received semantic tokens and executed control actions, and enables long-horizon imagined rollouts. Based on this imagination ability, an actor-critic (AC)-based control policy is trained for long-horizon agent control with high data efficiency, while the semantic token selector is trained through CIV-per-bit evaluation.

    \item We introduce an AirSim-Sionna-based simulator with realistic wireless channel modeling for physical AI systems. Extensive simulation results on the UAV navigation task show that WM-CDT achieves up to $17.3\%$, $55.4\%$, $30.7\%$, and $33.7\%$ higher return-per-kbit than AC-based recurrent reinforcement learning (AC-RRL) \cite{gc}, model-based policy optimization (MBPO), AC, and proximal policy optimization (PPO), respectively, and improves the navigation success rate by up to $9.6\%$, $26.3\%$, $16.1\%$, and $18.0\%$, respectively. Moreover, CIV is shown to be more aligned with the true long-horizon return gain than the myopic VoI, saliency score, and confidence score.
\end{itemize}

\vspace{-0.4cm}
\section{Related Works}
\vspace{-0.2cm}
\subsection{Goal-Oriented Semantic Communications}
\vspace{-0.15cm}
Existing research on goal-oriented semantic communications~\cite{r3,r4,r5,r6,r7} has mainly focused on one-shot tasks, such as image classification~\cite{r3} and speech-to-text translation~\cite{r4}. The solutions in ~\cite{r3,r4,r5,r6,r7} seek to transmit task-relevant semantics for improving the performance of a specific downstream task, but they do not consider long-horizon closed-loop system-level optimization. The work in~\cite{r3} proposed a multi-task semantic communication framework for autonomous vehicles that jointly performs image reconstruction and classification. Recent studies~\cite{r5,r6,r7} have also moved toward data-efficient, channel-aware, and explicitly goal-driven semantic communication designs. For instance, the authors in~\cite{r5} proposed a self-supervised low-label semantic communication framework to reduce the dependence on annotated data. Despite these advances, most existing works~\cite{r3,r4,r5,r6,r7} still optimize semantic transmission for instantaneous or short-term task performance. They do not explicitly account for how transmitted semantics affect future belief updates, control actions, state evolution, and long-horizon task performance in closed-loop physical AI systems. In \cite{gc}, we developed a semantic communication framework for supporting physical AI agents. However, our solution in \cite{gc} does not take into account the causal effects of transmitted semantics, thus it cannot capture long-horizon information value and lacks long-term closed-loop optimization in physical AI systems.

\vspace{-0.4cm}
\subsection{Information Value in Communication and Control}
\vspace{-0.1cm}
Research on information value in communication and control~\cite{r9,r10,r11} ranges from freshness-aware scheduling toward task-aware and semantics-aware metrics. The authors in~\cite{r9} compared AoI and VoI in cellular networked control systems and showed that information freshness alone cannot capture control relevance. More recent studies~\cite{r12,r14,r15} have extended these metrics toward semantic importance and joint communication-control optimization. In~\cite{r12}, the authors developed a semantic value metric to measure the importance of semantic information in text transmission, and in~\cite{r14} and~\cite{r15}, VoI was exploited for the joint optimization of control and communication. However, the works in~\cite{r9,r10,r11} and \cite{r12,r14,r15} mainly evaluate information utility for short-term scheduling objectives, and they do not explicitly capture the influence of transmitted semantics on long-horizon task return in closed-loop physical AI systems, where transmitted information affects future belief updates, control decisions, state evolution, and environment-agent dynamics.

\vspace{-0.4cm}
\subsection{World Model-Based Learning Approaches}
\vspace{-0.1cm}
The use of world models for long-horizon planning in robotics and embodied AI tasks has recently attracted attention \cite{r21,r22,r24}. In~\cite{r18,r17,r19}, the authors showed that recurrent state-space model (RSSM)-based world models can support robust imagination-based reinforcement learning across diverse domains. 
The works in~\cite{r25,r26,r27,r28,r29} used world models in the context of wireless networking and edge intelligence. In~\cite{r25} and~\cite{r26}, we proposed world-model-based learning approaches for highly dynamic and mobile wireless networks with few-shot generalization. These works~\cite{r25,r26,r27,r28,r29} demonstrate the promise of world models for improving data efficiency, generalization, and long-horizon optimization in wireless systems. 
However, existing wireless world-model approaches~\cite{r25,r26,r27,r28,r29} represent network-side states and actions for network evolution prediction, rather than modeling information as intervention variables that update the remote controller's belief and influence future physical actions. Hence, these approaches cannot evaluate long-horizon information value.
This motivates a world-model design that explicitly couples semantic transmission, belief evolution, physical agent control, and long-term task return in physical AI systems.

\begin{figure*}[ht!]
  \centering
  \includegraphics[width=0.7\linewidth]{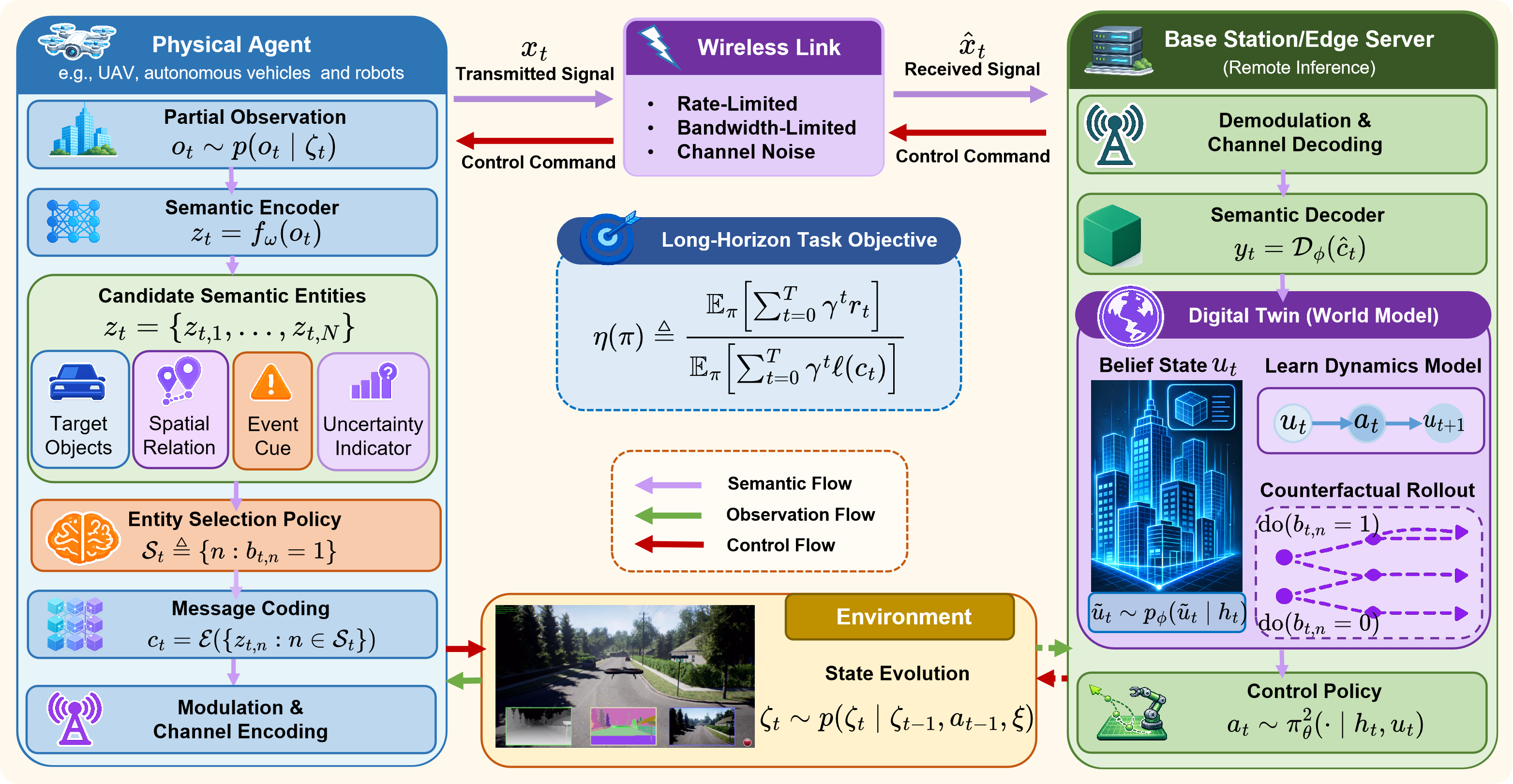}
  \vspace{-0.35cm}
  \caption{The closed-loop goal-oriented semantic communication system with a world model-enabled causal digital twin.}
  \vspace{-0.7cm}
  \label{fig:system}
\end{figure*}

\vspace{-0.4cm}
\section{System Model for Goal-Oriented Semantic Communications}
\vspace{-0.2cm}
In this section, we introduce goal-oriented semantic communications in closed-loop physical AI systems. Then, we formulate a system-level return-per-bit optimization problem by jointly optimizing the agent control and semantic transmission.

\vspace{-0.4cm}
\subsection{System Model} 
\vspace{-0.1cm}
We consider a closed-loop wireless physical AI system consisting of a physical AI agent, such as a UAV or a mobile robot, and a remote decision maker deployed at an edge server, as shown in Fig. \ref{fig:system}. The physical AI agent senses and acts in a physical environment, whose state at time slot $t$ is represented by $\boldsymbol{\zeta}_t$. After the execution of control action $\boldsymbol{a}_{t-1}$, the environment state evolves as
$
    \boldsymbol{\zeta}_t \sim p(\boldsymbol{\zeta}_t \mid \boldsymbol{\zeta}_{t-1}, \boldsymbol{a}_{t-1}, \xi),
$
where $\xi$ represents exogenous uncertainty and disturbances, and $\boldsymbol{a}_{t-1}$ is the action of the physical AI agent at time $t-1$. From the environment dynamics, the physical AI agent acquires a local partial observation $\boldsymbol{o}_t\sim p(\boldsymbol{o}_t\mid\boldsymbol{\zeta}_t)$.

At each time slot $t$, the physical agent first executes the control command $\boldsymbol{a}_{t-1}$ resulting from the control policy computed at the edge server and then extracts $N$ candidate semantic-entity tokens (referred to hereinafter as ``semantic tokens'' for brevity) from the acquired local observation $\boldsymbol{o}_t$ by using a semantic encoder $f_{\omega}(\cdot)$:
\vspace{-0.3cm}
\begin{equation}
    \boldsymbol{z}_t = f_{\omega}(\boldsymbol{o}_t)
    = \{\boldsymbol{z}_{t,1},\ldots,\boldsymbol{z}_{t,N}\},
\end{equation}
where $\omega$ captures the parameters of the encoder at the physical agent.
Each semantic token $\boldsymbol{z}_{t,n}$ is a compact symbolic descriptor of a task-relevant aspect of the current observation, such as an object hypothesis, a spatial relation, an event cue such as an approaching obstacle or collision risk, or an uncertainty indicator. Since the wireless network's uplink is rate-limited, the agent cannot transmit all candidate tokens at every time slot. We use $b_{t,n}\in\{0,1\}$ to indicate whether token $n$ is transmitted, and the token selection is generated by the selector $b_{t,n} \sim \pi^1_{\varpi}$. Let the selected token subset be $\mathcal{B}_t\triangleq\{n:b_{t,n}=1\}$. The selected tokens are encoded into a semantic message by
$
    \boldsymbol{c}_t = \mathcal{E}\!\left(\{\boldsymbol{z}_{t,n}: n \in \mathcal{B}_t\}\right),
$
with code length $\ell(\boldsymbol{c}_t)$ bits, and then modulated into a physical transmit signal $x_t$ for uplink delivery.

The received signal at the remote server is given by
$
\hat{\boldsymbol{x}}_t = \boldsymbol{H}_t \boldsymbol{x}_t + \boldsymbol{n}_t,
$
where $\boldsymbol{H}_t$ is the time-varying wireless channel and $\boldsymbol{n}_t$ is the additive noise. The received signal is then demodulated and decoded into $\hat{\boldsymbol{c}}_t$, from which semantic tokens $\boldsymbol{y}_t$ are reconstructed by the message decoder $\boldsymbol{y}_t = \mathcal{Q}(\hat{\boldsymbol{c}}_t)$.
Based on the received tokens, the edge server maintains a belief $\boldsymbol{u}_t$ of the agent system by the recurrent model:
\vspace{-0.25cm}
\begin{subequations}
	\begin{align}
    \boldsymbol{h}_t &= \Psi_{\phi}(\boldsymbol{h}_{t-1}, \boldsymbol{u}_{t-1}, \boldsymbol{a}_{t-1}),\\
	\boldsymbol{u}_t &\sim q_{\phi}(\boldsymbol{u}_t \mid \boldsymbol{h}_{t}, \boldsymbol{y}_t),
	\end{align}
\end{subequations}
where $\boldsymbol{h}_t$ is the historical deterministic information.
Then, the edge server selects the next agent control with the policy $\pi^2_{\theta}$ by
$
    \boldsymbol{a}_t \sim \pi^2_{\theta}(\cdot \mid \boldsymbol{h}_t, \boldsymbol{u}_t),
$
and feeds the control command $\boldsymbol{a}_t$ to the physical agent, where $\theta$ is the parameter of the control policy.  
In contrast to existing semantic communication solutions~\cite{r3,r4,r5,r6,r7} that are tailored to one-step downstream tasks, in our model, the selected semantic tokens $\boldsymbol{y}_t$ from the physical agent to the remote server shape the recurrent control state $\boldsymbol{h}_t$, which, in turn, impacts the subsequent control commands $\boldsymbol{a}_{>t}$ taken by the agent. Moreover, these control commands further reshape future environment evolution $\boldsymbol{\zeta}_{>t}$, observations $\boldsymbol{o}_{>t}$, and long-term task performance. Hence, most existing one-step optimization solutions~\cite{r3,r4,r5,r6,r7} are inadequate for capturing this closed-loop physical AI system, and a new system-level formulation is required to jointly optimize long-horizon communication and task performance.

\vspace{-0.5cm}
\subsection{Problem Formulation}
\vspace{-0.1cm}
We define $r_t$ as the task reward at time slot $t$.
In a networked physical AI system, $r_t$ represents the closed-loop task outcome induced by the transmitted semantics and the resulting control action. 
For example, in UAV navigation, $r_t$ can reward the UAV for moving closer to the destination and successfully reaching the target, while penalizing collisions, unsafe trajectories, or unnecessary motion. Similarly, in autonomous driving, $r_t$ can reward the vehicle for safely following the planned route, maintaining lane discipline, and keeping a safe distance from surrounding vehicles, while penalizing collisions, lane departures, abrupt maneuvers, or inefficient driving.
Thus, a semantic message is valuable not because it is accurately reconstructed, but because it helps the remote server make actions that improve the task performance over time. The long-horizon discounted return from time $t$ onward is defined as
$G_t \triangleq \sum_{\tau=t}^{T}\gamma^{\tau-t}r_{\tau}$,
where $\gamma\in(0,1]$ is a discount factor, and $T$ is the episode horizon.
Under the joint semantic selection and control policy $\pi=\{\pi^1_{\varpi},\pi^2_{\theta}\}$, the discounted task return and discounted communication cost are, respectively, given by
\vspace{-0.3cm}
\begin{equation}
R(\pi)\!\triangleq\!\mathbb{E}_{\pi}\!\!\left[\sum_{t=0}^{T}\!\gamma^tr_t\!\right]\!\!=\!\mathbb{E}_{\pi}\!\left[G_0\right],
C(\pi)\!\triangleq\!\mathbb{E}_{\pi}\!\left[\sum_{t=0}^{T}\!\!\gamma^t\ell(\boldsymbol{c}_t)\!\right].
\end{equation}
Here, $R(\pi)$ measures the expected long-horizon physical task performance, while $C(\pi)$ measures the expected number of semantic bits consumed to support the closed-loop interaction.
Hence, the semantic communication problem for the closed-loop physical AI system is formulated as
\vspace{-0.15cm}
\begin{subequations}\label{eq:P1_ratio_obj_bit}
\begin{align}
\max_{\pi}\quad & \eta(\pi) \triangleq \frac{R(\pi)}{C(\pi)}\\
\text{s.t.}\quad & C(\pi)>0, \\
& \ell(\boldsymbol{c}_t)\le U_t, \quad \forall t,
\end{align}
\end{subequations}
where $\eta(\pi)$ measures the amount of long-horizon physical task utility obtained per transmitted semantic bit, and $U_t$ is the per-slot uplink bit budget.
The ratio $\eta(\pi) $ is important for semantic communications in closed-loop physical AI systems. For instance, in UAV navigation, a policy has a high $\eta(\pi)$ if it reaches the destination and avoids collisions by transmitting only a small number of control-relevant semantic descriptions, such as obstacle locations or goal-direction cues. 
In contrast, a policy has a low $\eta(\pi)$ if it transmits many visually detailed but control-irrelevant features that do not improve the UAV trajectory, or if it saves bits so aggressively that the remote controller loses critical obstacle information and the UAV fails the task.
Therefore, maximizing $\eta(\pi)$ encourages the communication system to allocate bits to semantics that improve closed-loop system-level performance, rather than to semantics that are merely informative for one-step perception.

The optimization problem in \eqref{eq:P1_ratio_obj_bit} is challenging to solve by existing RL approaches~\cite{10758370,10744542,10915558,10077432} for three main reasons. First, the utility of a transmitted token must be evaluated over extended horizons. Once received by the edge server, the token changes the latent belief of the environment-agent system, which further impacts the selected control action, the subsequent physical state evolution, future observations, and ultimately the long-horizon return. Hence, the value of a semantic token cannot be evaluated from the current time slot alone or from its observed correlation with a high task return. 
Second, semantic token selection is a combinatorial decision tightly coupled with physical agent control under the per-slot bit-budget constraint. The contribution of each token depends on the other transmitted tokens due to redundancy and complementarity. Moreover, the fractional objective in \eqref{eq:P1_ratio_obj_bit} couples accumulated task return with accumulated communication cost, and, thus, the system must determine whether each semantic token is worth transmitting for long-horizon closed-loop performance. 
Hence, simple feature-importance scores cannot reliably identify the most useful semantics for long-horizon control performance~\cite{r9,r10,r11}. Third, the edge server has only partial access to the environment through received semantic tokens, while the environment dynamics and the wireless channel are nonlinear, stochastic, and uncertain. It is therefore difficult to directly apply traditional mathematical optimization approaches~\cite{11164706,10943268,10974474,11069265} without strong and often unrealistic analytical assumptions. Here, the use of model-free RL~\cite{10758370,10744542,10915558,10077432} is also not appropriate because reliable estimation of long-horizon return-per-bit from real physical interactions requires extensive trial-and-error exploration, which is data-inefficient and can be unsafe for physical AI agents. These challenges motivate a learning framework that can model closed-loop physical and wireless dynamics, perform counterfactual token value estimation, and jointly optimize token selection and control over extended horizons.

\vspace{-0.6cm}
\section{Proposed World Model-Enabled Causal Digital Twin} 
\vspace{-0.2cm}
Motivated by the above challenges, we develop a WM-CDT framework to solve the return-per-bit problem in \eqref{eq:P1_ratio_obj_bit}. This approach is suitable for this problem because the learned digital twin can capture the coupled evolution of semantic transmission, belief updates, physical control, and environment dynamics, while enabling counterfactual and imagined rollouts without interacting with the real physical system. We first introduce a novel CIV metric for counterfactual valuation of long-horizon information value. We then present the proposed WM-CDT and show how it can learn the closed-loop dynamics of physical AI systems and enable imagined rollouts for CIV estimation and agent control. Next, we present a goal-oriented semantic encoder, an AC control policy, and a CIV-guided token selector. Finally, we analyze the computational complexity and summarize the centralized-training-distributed-execution (CTDE) implementation of the proposed framework.

\subsection{Causal Information Value Metric}
\vspace{-0.1cm}
First, in order to evaluate long-horizon information value of semantic tokens in physical AI systems, the simple approach of directly associating a transmitted token with the observed short-term or single-step return is not suitable because the token-selection decision is history-dependent and affects long-term task performance. Particularly, a token may appear useful simply because it is selected in situations where the agent is already likely to achieve a high return. At the same time, a token that is not useful at the current time step may play a crucial role in the future. Hence, we introduce a novel CIV metric to quantify the marginal causal contribution of each semantic token to long-horizon closed-loop task performance  in physical AI systems. We define $\mathcal{Y}_t$ as the historical states of the wireless physical AI system before token selection at time $t$, which can include the current local observation, past received actions, and the preceding interaction history. The value of a candidate semantic token should be assessed by comparing two hypothetical system evolutions that share the same history $\mathcal{Y}_t$ and differ only in whether that token is transmitted. This requires a \emph{counterfactual comparison} of interventional outcomes. Particularly, we externally set the transmission decision of the token of interest to either $b_{t,n}=1$ or $b_{t,n}=0$, while keeping the remaining semantic context and system dynamics unchanged. 
Based on this principle, we first define the subset-conditioned CIV of any semantic token $\boldsymbol{z}_{t,n}$, $n \in \mathcal{B}_t$, as the change in expected long-horizon return caused by intervening on its transmission as follows.

\vspace{-0.1cm}
\begin{definition}
For a candidate semantic token $\boldsymbol{z}_{t,n}$ within the subset $\mathcal{B}_t$, the \emph{subset-conditioned CIV} is defined as
\begin{equation}
\begin{aligned}
	\label{eq:CIV}
\mathrm{CIV}_{t,n}(\mathcal{B}_t)
&\triangleq 
\mathbb{E}\!\left[
G_t \mid \mathrm{do}(b_{t,n}=1),\,\mathcal{Y}_t,\,\mathcal{B}_t\setminus\{n\}
\right] \\
&- 
\mathbb{E}\!\left[
G_t \mid \mathrm{do}(b_{t,n}=0),\,\mathcal{Y}_t,\,\mathcal{B}_t\setminus\{n\}
\right].
\end{aligned}
\end{equation}
where $\mathrm{do}(b_{t,n}=\cdot)$ is a Pearl's do-operator, which intervenes on the
transmission selection of $\boldsymbol{z}_{t,n}$ at time step $t$, while leaving the remaining token selection unchanged.
\end{definition}

Hence, $\mathrm{CIV}_{t,n}$ measures the marginal long-horizon contribution of the semantic token $\boldsymbol{z}_{t,n}$ by comparing two intervention-consistent system evolutions with the same history and the same remaining semantic context, rather than evaluating single-step reward correlation. CIV provides a token-level causal valuation of how much a semantic token contributes to the long-horizon return term \(R(\pi)\) for the return-per-bit optimization problem in \eqref{eq:P1_ratio_obj_bit}. When combined with the bit cost of the token, CIV can further guide semantic selection toward tokens that provide larger long-horizon task improvement per transmitted bit. In this way, CIV evaluates the semantic utility at the system level, where it captures not only how informative a token is at the current time step, but also how transmitting that token changes future belief evolution, control actions, and ultimately the long-horizon task return. However, it is challenging to evaluate \(\mathrm{CIV}_{t,n}\) in a real wireless physical AI system. In particular, \(\mathrm{CIV}_{t,n}\) is a counterfactual quantity and its evaluation requires comparing two future trajectories that share the same history but differ only in whether token \(n\) is transmitted. Such intervention-consistent comparisons are often unavailable from raw observations alone and cannot be obtained by naive correlation-based estimation.  This motivates the introduction of a world-model-enabled digital twin, which serves as a latent simulator for counterfactual semantic evaluation, long-horizon decision-making, and high data efficiency, as introduced next.

\vspace{-0.3cm}
\subsection{World Model-Enabled Digital Twins}
\label{subsec:bs_world_model}
To solve the return-per-bit maximization problem in \eqref{eq:P1_ratio_obj_bit}, the edge server needs a predictive model that can reason about how transmitted tokens affect future belief updates, control actions, physical state evolution, and long-horizon task return. Direct evaluation in the real system is inefficient and potentially unsafe, since each token selection and control policy may induce a different closed-loop trajectory. Hence, we construct a WM-CDT at the edge server as a learned latent simulator of the closed-loop physical AI system. The digital twin learns a compact latent representation of the environment-agent dynamics from received semantic tokens, executed actions, and rewards. Based on this learned model, the edge server can perform imagined rollouts to estimate long-horizon returns under candidate control policies and counterfactual semantic transmission interventions. Hence, the digital twin provides the foundation for both CIV estimation and long-horizon control planning under partial system observability and wireless bit-budget constraints. In particular, we introduce an RSSM-based digital twin:
\vspace{-0.2cm}
\begin{subequations}
  \label{eq:rssm}
\begin{align}
\text{Deterministic State:}\qquad \boldsymbol{h}_t &= \Psi_{\phi}(\boldsymbol{h}_{t-1}, \boldsymbol{u}_{t-1}, \boldsymbol{a}_{t-1}),\\
\text{Belief Update:}\qquad \boldsymbol{u}_t &\sim q_{\phi}(\boldsymbol{u}_t \mid \boldsymbol{h}_{t}, \boldsymbol{y}_t),\\
\text{Prior Dynamics:}\qquad \tilde{\boldsymbol{u}}_t &\sim p_\phi\!\left(\tilde{\boldsymbol{u}}_t \mid \boldsymbol{h}_t\right), \\
\text{Decoder:}\qquad \hat{\boldsymbol{y}}_t &\sim p_\phi\!\left(\hat{\boldsymbol{y}}_t \mid \boldsymbol{h}_t,\boldsymbol{u}_t\right), \\
\text{Reward Prediction:}\qquad \hat{r}_t &\sim p_\phi\!\left(\hat{r}_t \mid \boldsymbol{h}_t,\boldsymbol{u}_t\right),
\end{align}
\end{subequations}
where $\Psi_{\phi}(\cdot)$ is the deterministic transition model, $p_\phi(\tilde{\boldsymbol{u}}_t \mid \boldsymbol{h}_t)$ is the latent prior model, $q_\phi(\boldsymbol{u}_t \mid \boldsymbol{h}_t,\boldsymbol{y}_t)$ is the posterior inference model that updates the latent belief by using the received semantic tokens, $p_\phi(\boldsymbol{y}_t \mid \boldsymbol{h}_t,\boldsymbol{u}_t)$ is the decoder of semantic tokens, and $p_\phi(\hat{r}_t \mid \boldsymbol{h}_t,\boldsymbol{u}_t)$ is the reward prediction model. 
We adopt the RSSM-based world model in \eqref{eq:rssm} to obtain a probabilistic belief dynamics model for the partially observable closed-loop physical AI system. RSSM can jointly learn posterior belief inference from received semantic tokens, prior dynamics for token-free imagination, and semantic/reward prediction. This structure enables intervention-consistent imagined rollouts under hypothetical semantic transmissions and control actions, thereby supporting both counterfactual CIV estimation and long-horizon planning without additional real-environment interactions.

Given data $\{(\boldsymbol{y}_t,\boldsymbol{a}_t,r_t)\}_{t=0}^{T}$ collected from actual system interactions, the world model can be trained by maximizing the following variational evidence lower bound (ELBO) \cite{r21}:
\begin{align}
	\label{eq:losswm}
\mathcal{L}_{\mathrm{wm}}(\phi)&
=\mathbb{E}_{q_\phi}\!\!\left[\sum_{t=0}^{T}
\underbrace{\log p_\phi(\hat{\boldsymbol{y}}_t \mid \boldsymbol{h}_t,\boldsymbol{u}_t)
+\log p_\phi(\hat{r}_t \mid \boldsymbol{h}_t,\boldsymbol{u}_t)}_{\text{preserve predictive information}}\right. \nonumber \\
&\quad \left.-
\sum_{t=0}^{T}
\underbrace{\mathrm{KL}\!\left(
q_\phi(\boldsymbol{u}_t \mid \boldsymbol{h}_t,\boldsymbol{y}_t)
\,\|\, 
p_\phi(\tilde{\boldsymbol{u}}_t \mid \boldsymbol{h}_t)
\right)}_{\text{posterior belief toward the latent prior}}\right].
\end{align}

\subsection{Temporal Difference (TD)-Based Semantic Encoder}
\label{subsec:selfsup_encoder}
The semantic encoder $f_{\omega}(\cdot)$ is designed to extract compact semantic tokens at the physical agent. However, the conventional reconstruction-based encoder~\cite{a1} usually preserves features for reproducing the original observations, which is not fully aligned with the return-per-bit objective in \eqref{eq:P1_ratio_obj_bit}, since sensory details that are useful for observation reconstruction may not contribute to closed-loop control, but may still occupy semantic tokens and consume wireless resources. Hence, the goal-oriented semantic encoder should extract compact and control-relevant tokens for long-horizon control optimization.
To this end, we first train the encoder with an action-conditioned TD objective, so that the learned tokens capture semantic features related to future system evolution rather than merely preserving observation-level details~\cite{hansen2023td}. Let $d_{\omega}(\cdot)$ be a TD dynamics head. Given the current semantic tokens $\boldsymbol{z}_t=f_{\omega}(\boldsymbol{o}_t)$ and the executed action $\boldsymbol{a}_t$, the next-step semantic tokens are predicted as
$\hat{\boldsymbol{z}}_{t+1}=d_{\omega}(\boldsymbol{z}_t,\boldsymbol{a}_t)$.
To further align the learned representation with long-horizon task utility and avoid representation collapse, we attach a semantic value head $\Omega_{\chi}(\cdot)$ to the encoder during centralized training, which predicts the long-horizon discounted return. The semantic encoder is then trained by minimizing
\vspace{-0.2cm}
\begin{equation}
    \mathcal{L}_{\mathrm{enc}}(\omega,\chi)
    =\left\|\hat{\boldsymbol{z}}_{t+1}-\mathrm{sg}\!\left(\boldsymbol{z}_{t+1}\right)\right\|_2^2
    +\beta_{\chi}\left(\Omega_{\chi}(\boldsymbol{z}_t)-G_t\right)^2,
\end{equation}
where $\mathrm{sg}(\cdot)$ is the stop-gradient operator, and $\beta_{\chi}\in(0,1)$ is the weighting factor.

\vspace{-0.3cm}
\subsection{Actor-Critic-Based Agent Control}
\label{subsec:ac_planning}
Given the WM-CDT, we optimize the control policy of the physical agent, which must account for the coupled effects of sensing, semantic transmission, inference, and physical dynamics over an extended horizon. Hence, we adopt an AC-based RL approach as the planner of the world model for agent control to solve the long-horizon return-per-bit problem in \eqref{eq:P1_ratio_obj_bit}. We approximate the partially observable problem by a partially observable Markov decision process (POMDP) with the state space $\mathcal{S}$ and the  action space $\mathcal{A}$, where the state is $\{\boldsymbol{h}_t,\boldsymbol{u}_t\}$, and the action is the agent control $\boldsymbol{a}_t$.

However, it is inefficient to learn the policy directly from the fractional return-per-bit objective in \eqref{eq:P1_ratio_obj_bit}.  Since the objective increases when the communication cost is reduced, early-stage training may bias the selector toward transmitting too few semantic tokens, even when these tokens are still needed for reliable belief updates and control learning. To obtain an additive objective suitable for AC learning, we use a Dinkelbach-inspired transformation of the fractional objective. For a given per-bit price $\alpha$, we define the shaped reward as
\vspace{-0.2cm}
\begin{equation}
\tilde r_t \triangleq \hat{r}_t - \alpha\,\ell(\boldsymbol{c}_t),
\label{eq:sr}
\end{equation}
where $\hat r_t$ is the reward prediction by the RSSM model at time slot $t$. The shaped reward penalizes communication cost while preserving the long-horizon task objective, thereby aligning both control learning and token selection with return-per-bit maximization. Theorem~\ref{thm:dinkelbach_main} characterizes the optimal return-per-bit policy through priced additive subproblems as follows.

\begin{theorem}
\label{thm:dinkelbach_main}
Let \(\Pi\) be a nonempty feasible policy set. Assume that the discounted
communication cost is strictly positive for every feasible policy, i.e.,
\(C(\pi)>0\) for all \(\pi\in\Pi\). Assume that the suprema are attained over \(\Pi\).
Define the return-per-bit objective $\eta(\pi)\triangleq R(\pi)/C(\pi)$ and its optimal value $\eta^*\triangleq \max_{\pi\in\Pi}\eta(\pi)$.
For any $\alpha\in\mathbb{R}$, define
\vspace{-0.15cm}
\begin{equation}
F(\alpha)\triangleq \sup_{\pi\in\Pi}\big(R(\pi)-\alpha C(\pi)\big).
\end{equation}
Then $\eta^*$ is the unique scalar satisfying $F(\eta^*)=0$, and any ratio-optimal policy $\pi^*$ maximizes the priced objective at $\alpha=\eta^*$ and achieves zero gap:
\vspace{-0.2cm}
\begin{equation}
\pi^* \! \in \! \arg\max_{\pi\in\Pi}\big(R(\pi)-\eta^* C(\pi)\big),
\,
R(\pi^*)-\eta^* C(\pi^*)=0.
\end{equation}
\end{theorem}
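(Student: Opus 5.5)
The argument is the classical Dinkelbach one, and it uses only the positivity of $C$ and the attainment assumptions. Throughout, write $g_\pi(\alpha)\triangleq R(\pi)-\alpha C(\pi)$, so that $F(\alpha)=\sup_{\pi\in\Pi} g_\pi(\alpha)$. The plan has three steps: establish monotonicity of $F$, show $F(\eta^*)=0$, and then extract the optimality statements for $\pi^*$.

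\emph{Monotonicity of $F$.} For each fixed $\pi$, $g_\pi$ is affine in $\alpha$ with slope $-C(\pi)<0$. Since $F$ is a pointwise supremum of such functions, it is convex and nonincreasing. For strict decrease, take $\alpha_1<\alpha_2$ and let $\pi_2$ attain $F(\alpha_2)$; attainment is where the hypothesis on the suprema is needed. Then
\begin{equation*}
F(\alpha_1)\ge R(\pi_2)-\alpha_1 C(\pi_2)=F(\alpha_2)+(\alpha_2-\alpha_1)C(\pi_2)>F(\alpha_2),
\end{equation*}
so $F$ is strictly decreasing and therefore has at most one zero. This settles the uniqueness claim.

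\emph{$F(\eta^*)=0$.} For every $\pi\in\Pi$ we have $\eta(\pi)\le\eta^*$. Multiplying by $C(\pi)>0$ gives $R(\pi)-\eta^*C(\pi)\le 0$, and hence $F(\eta^*)\le 0$. Conversely, a ratio-optimal $\pi^*$ exists because $\eta^*$ is defined as a maximum. It satisfies $R(\pi^*)=\eta^*C(\pi^*)$, so $g_{\pi^*}(\eta^*)=0$ and $F(\eta^*)\ge 0$. Together these give $F(\eta^*)=0$, and strict monotonicity then makes $\eta^*$ the unique root.

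\emph{Properties of $\pi^*$.} For any ratio-optimal $\pi^*$, the computation above gives $g_{\pi^*}(\eta^*)=0=F(\eta^*)$. Hence $\pi^*$ attains the supremum of the priced objective at $\alpha=\eta^*$, i.e., $\pi^*\in\arg\max_{\pi\in\Pi}\big(R(\pi)-\eta^*C(\pi)\big)$, with zero gap.

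The only delicate point is strict monotonicity. Without attainment, or without a uniform positive lower bound on $C$, the supremum of slopes could degenerate, and $F$ might only be nonincreasing. I would therefore invoke the attainment hypothesis explicitly at that step. An alternative route to uniqueness that avoids it: if $F(\alpha)=0$ with $\alpha\neq\eta^*$, then testing $F(\alpha)\ge g_{\pi^*}(\alpha)$ and using $F(\alpha)=0=g_{\hat\pi}(\alpha)$ for an attaining $\hat\pi$ leads to a contradiction, since $R(\hat\pi)/C(\hat\pi)=\alpha$ and $\alpha\le\eta^*$ force $\alpha<\eta^*$ while $0\ge g_{\pi^*}(\alpha)=(\eta^*-\alpha)C(\pi^*)>0$.
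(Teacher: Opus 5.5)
Your proof is correct. The steps giving $F(\eta^*)=0$ and the zero-gap/argmax property of $\pi^*$ match the paper's: the upper bound comes from $\eta(\pi)\le\eta^*$, and the lower bound from a ratio-optimal policy.

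The difference is in how you get uniqueness. The paper factors $R(\pi)-\alpha C(\pi)=C(\pi)\big(\eta(\pi)-\alpha\big)$ and does a sign analysis around $\eta^*$:
\begin{itemize}
\item $F(\alpha)>0$ for $\alpha<\eta^*$, witnessed by a policy whose ratio exceeds $\alpha$;
\item $F(\alpha)<0$ for $\alpha>\eta^*$, since every term is negative and the maximum is attained.
\end{itemize}
You instead prove the classical Dinkelbach lemma that $F$ is strictly decreasing on all of $\mathbb{R}$, invoking attainment at the larger price.

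Your route gives a stronger, global statement. Together with $F(\eta^*)=0$ it immediately yields the paper's sign pattern. It is also the property on which the classical convergence analysis of Dinkelbach's price iteration rests, and the paper's adaptive update $\alpha\leftarrow\max\{0,\hat R(\pi)/(\hat C(\pi)+\epsilon)\}$ imitates that iteration. The paper's route is more direct and needs attainment of the priced supremum only for prices above $\eta^*$. Your ``alternative route'' at the end is essentially the paper's argument recast as a proof by contradiction.

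Two minor remarks. First, the paper also proves the converse (a maximizer of the priced objective at $\eta^*$ with zero gap is ratio-optimal), which the statement does not require. Second, your closing caveat should say that strict decrease can fail only if attainment \emph{and} a uniform positive lower bound on $C$ are both absent. Either one alone suffices when $F$ is finite.
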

\vspace{-0.3cm}
\begin{proof}
See the Appendix.
\end{proof}
\vspace{-0.1cm}

Theorem~\ref{thm:dinkelbach_main} shows that the fractional return-per-bit objective can be represented through an additive priced objective at the optimal communication price $\alpha=\eta^*$. Hence, optimizing the cumulative shaped reward $\hat{r}_t-\alpha\ell(\boldsymbol{c}_t)$ in \eqref{eq:sr} allows AC learning to trade off task return and communication cost while remaining consistent with the original return-per-bit objective.

Based on the shaped reward $\tilde r_t$, we optimize the control policy with AC learning in the latent space of the causal digital twin, where the actor model is represented by $\boldsymbol{a}_t \sim \pi^2_{\theta}(\cdot \mid \boldsymbol{h}_t, \boldsymbol{u}_t)$ and the critic model with the parameter $\psi$ estimates the discounted return:
\vspace{-0.2cm}
\begin{equation}
V_{\psi}(\boldsymbol{h}_t,\boldsymbol{u}_t)
\approx
\mathbb{E}_{\pi}\!\left[
\sum_{\tau=t}^{T}\gamma^{\tau-t}\tilde r_{\tau}
\,\bigg|\,
\boldsymbol{h}_t,\boldsymbol{u}_t
\right].
\end{equation}
The actor model and the critic model are trained over the imagined rollouts generated by the learned world model. In particular, at each imagined step $\tau$, an action is sampled from $\pi^2_{\theta}(\cdot\mid \boldsymbol{h}_{\tau},\tilde{\boldsymbol{u}}_{\tau})$, the RSSM dynamics propagates the latent belief to $(\boldsymbol{h}_{\tau+1},\tilde{\boldsymbol{u}}_{\tau+1})$, and the reward model predicts the corresponding shaped reward $\tilde r_{\tau}$. Then, an \emph{imagined trajectory} $\tilde{\mathcal{J}} = \{(\boldsymbol{h}_{\tau},\tilde{\boldsymbol{u}}_{\tau},\boldsymbol{a}_{\tau},\tilde r_{\tau})\}_{\tau=t}^{t+H-1}$ with horizon size $H$ is obtained, which is used for control learning without requiring costly trial-and-error interaction in the real system.
To stabilize long-horizon value estimation, we adopt a $\lambda$-return target from the imagined rollout. The $k$-step bootstrapped return is
\vspace{-0.2cm}
\begin{equation}
G^{(k)}_{t}
\triangleq
\sum_{j=0}^{k-1}\gamma^{j}\tilde r_{t+j}
+
\gamma^{k}V_{\psi}(\boldsymbol{h}_{t+k},\tilde{\boldsymbol{u}}_{t+k}), k=1,\ldots,H,
\end{equation}
and the corresponding $\lambda$-return target is
\vspace{-0.2cm}
\begin{equation}
V_{\lambda}(\boldsymbol{h}_t,\tilde{\boldsymbol{u}}_t)
\triangleq
(1-\lambda)\sum_{k=1}^{H-1}\lambda^{k-1}G^{(k)}_{t}
+
\lambda^{H-1}G^{(H)}_{t},
\end{equation}
where $\lambda\in[0,1]$. The critic is updated by minimizing
\vspace{-0.2cm}
\begin{equation}
\mathcal{L}_{\mathrm{critic}}(\psi)
=
\mathbb{E}\left[
\big(
V_{\psi}(\boldsymbol{h}_t,\tilde{\boldsymbol{u}}_t)
-
V_{\lambda}(\boldsymbol{h}_t,\tilde{\boldsymbol{u}}_t)
\big)^2
\right].
\end{equation}
The actor is trained to maximize the predicted long-horizon return along imagination:
\vspace{-0.2cm}
\begin{equation}
\begin{aligned}
\mathcal{L}_{\mathrm{actor}}(\theta)
&=
-\mathbb{E}\left[
\sum_{\tau=t}^{t+H-1}
\gamma^{\tau-t}
V_{\lambda}(\boldsymbol{h}_{\tau},\tilde{\boldsymbol{u}}_{\tau})
\right] \\
&\qquad \qquad \qquad
-\beta
\mathbb{E}\!\left[
\mathcal{H}\!\left(
\pi_{2}^{\theta}(\cdot \mid \boldsymbol{h}_{\tau},\tilde{\boldsymbol{u}}_{\tau})
\right)
\right],
\end{aligned}
\end{equation}
where $\mathcal{H}(\cdot)$ represents the policy entropy and $\beta\ge 0$ is the exploration regularization. In this way, the edge server learns a control policy that is optimized not only for immediate task reward, but also for its long-horizon impact on future closed-loop performance under communication cost.

\subsection{Semantic Token Selection by Counterfactual Reasoning}
\label{subsec:wm_civ_uav_select}
We next optimize the semantic token selection policy $\pi^1_{\varpi}(\cdot \mid \boldsymbol{z}_{t})$ at the physical agent. At time $t$, the agent extracts a set of candidate semantic tokens $\boldsymbol{z}_t \triangleq \{\boldsymbol{z}_{t,1},\ldots,\boldsymbol{z}_{t,N}\}$,
where token $n$ has code length $\ell_{t,n}$ bits. Due to bit budget $U_t$, the selected token subset must satisfy the per-slot uplink bit budget $\sum_{n=1}^{N} b_{t,n}\ell_{t,n}\le U_t$.
The key challenge is that semantic tokens are not independent under shared resource constraints. Their utility depends on the other retained tokens due to redundancy, complementarity, and their coupled impact on future belief evolution and action decisions. To address this issue, we develop a gate network-based selection policy with counterfactual reverse pruning as follows.

\subsubsection{Estimated CIV}
Given a candidate subset  $\mathcal{B}_t \subseteq \{1,\ldots,N\}$, the subset-conditioned CIV of token $n$ defined in \eqref{eq:CIV} measures the marginal long-horizon contribution of a semantic  token under the same semantic context. By using the causal digital twin, the CIV can be approximated by using imagined rollouts. In particular, we define
\vspace{-0.15cm}
\begin{equation}
\hat{Q}_t(\mathcal{B}_t) \triangleq 
\mathbb{E}\left[
\sum_{\tau=t}^{t+H-1} \gamma^{\tau-t} \hat{r}_{\tau}
\,\middle|\, \mathcal{B}_t
\right]
\end{equation}
as the digital-twin-predicted long-horizon return after updating the 
belief with the token subset $\mathcal{B}_t$ and rolling out future actions by 
using $\pi_{\theta}^{2}$. Then, the subset-conditioned CIV can be estimated as
\vspace{-0.15cm}
\begin{equation}
\widehat{\mathrm{CIV}}_{t,n}(\mathcal{B}_t)
\triangleq
\hat{Q}_t(\mathcal{B}_t)
-
\hat{Q}_t(\mathcal{B}_t \setminus \{n\}).
\end{equation}
To account for communication efficiency, we further define the estimated 
CIV-per-bit of token $n$ under subset $\mathcal{B}_t$ as
\vspace{-0.15cm}
\begin{equation}
\hat{\rho}_{t,n}(\mathcal{B}_t)
\triangleq
\frac{
\widehat{\mathrm{CIV}}_{t,n}(\mathcal{B}_t)
}{
\ell_{t,n}+\epsilon_{\rho}
},
\end{equation}
where $\epsilon_{\rho} > 0$ is a small constant for numerical stability. While 
$\widehat{\mathrm{CIV}}_{t,n}(\mathcal{B}_t)$ measures absolute long-horizon 
utility, $\hat{\rho}_{t,n}(\mathcal{B}_t)$ measures how efficiently that utility 
is delivered per bit. Similar to Theorem 1, under the Dinkelbach price $\alpha$, 
we define the priced marginal contribution:
\vspace{-0.2cm}
\begin{equation}
\hat{\Gamma}_{t,n}(\mathcal{B}_t)
\triangleq
\widehat{\mathrm{CIV}}_{t,n}(\mathcal{B}_t)
-
\alpha \ell_{t,n},
\label{eq:pciv}
\end{equation}
which characterizes whether retaining token $n$ is beneficial after taking into account its communication cost.

\subsubsection{Gate prediction and causal reverse pruning}
The selector $\pi^1_{\varpi}(\cdot \mid \boldsymbol{z}_{t})$ is parameterized by a gate network that predicts a retention score and probability for each candidate token:
\vspace{-0.2cm}
\begin{equation}
s_{t,n}=g_{\varpi}(\boldsymbol{z}_{t,n}),
\qquad
p_{t,n}=\sigma(s_{t,n}),
\label{eq:gate_prob}
\end{equation}
where $g_{\varpi}(\cdot)$ is the gate model and $\sigma(\cdot)$ is the sigmoid function. During the training stage, instead of using these probabilities as final independent transmission decisions, we use them to construct an over-complete initial proposal set:
\vspace{-0.2cm}
\begin{equation}
\mathcal{B}_{t}^{(0)}
\triangleq
\mathrm{TopM}\!\left(\{p_{t,n}\}_{n=1}^{N};M_t\right),
\label{eq:init_set}
\end{equation}
where $\mathrm{TopM}(\{p_{t,n}\}_{n=1}^{N};M_t)$ is the operator that returns the indices of the $M_t$ tokens with the largest gate probabilities among $\{p_{t,n}\}_{n=1}^{N}$, and $M_t$ is the maximum number of semantic tokens that satisfies the bit budget $U_t$. Starting from $\mathcal{B}_{t}^{(0)}$, the final transmitted subset is obtained by counterfactual reverse pruning. In particular, let $\mathcal{B}_{t}^{(m)}$ be the subset at pruning step $m$. For each token $n\in\mathcal{B}_{t}^{(m)}$, the digital twin computes its contextual contribution through \eqref{eq:pciv}. Then, we remove the token with the lowest priced marginal contribution:
\vspace{-0.2cm}
\begin{equation}
n_{t,\mathrm{rm}}^{(m)}
=
\arg\min_{n\in\mathcal{B}_{t}^{(m)}}
\hat{\mathrm{\Gamma}}_{t,n}\!\left(\mathcal{B}_{t}^{(m)}\right),
\label{eq:prune_by_priced_civ}
\end{equation}
and update
$
\mathcal{B}_{t}^{(m+1)}=\mathcal{B}_{t}^{(m)}\setminus\{n_{t,\mathrm{rm}}^{(m)}\}.
$
This pruning procedure is repeated until the bit-budget constraint is satisfied and all retained tokens have a positive priced marginal contribution. The final selected subset is denoted by $\mathcal{B}_{t}^{*}$, and the hard selection mask is thus given by $b_{t,n}^{*}=\mathbb{I}\!\left(n\in\mathcal{B}_{t}^{*}\right)$. The above reverse-pruning strategy repeatedly deletes the token with the smallest contextualized long-horizon contribution, while CIV-per-bit and the priced contribution further improve communication efficiency.

To align the gate network with the reverse-pruning solution, we use the hard pruning decisions as self-supervised targets for CTDE, and, thus, the final selector loss is
\vspace{-0.2cm}
\begin{equation}
\mathcal{L}_{\mathrm{sel}}(\varpi)
\!=\!-\!\sum_{n=1}^{N}\!\left[
b_{t,n}^{*}\log p_{t,n}
\!+\!
(1\!-\!b_{t,n}^{*})\log(1\!-\!p_{t,n})
\right],
\label{eq:sel_total}
\end{equation}
where the gate network $\pi^1_{\varpi}$ first assigns an initial retention probability $p_{t,n}$ to each semantic token and forms a high-recall candidate set. The causal digital twin then refines this candidate set through subset-conditioned CIV evaluation, CIV-per-bit-aware reverse pruning, and priced contribution testing.

Inspired by Dinkelbach's method, at each policy-update stage, the adaptive price is updated by the clipped empirical ratio between the discounted return and communication cost as
$
\alpha\leftarrow 
\max\left\{0,\frac{\hat R(\pi)}{\hat C(\pi)+\epsilon}\right\},
$
where $\epsilon>0$ is a small constant for numerical stability, and $\hat R(\pi)$ and $\hat C(\pi)$ are computed from the latest batch of trajectories. The clipping operation prevents unstable pricing when the empirical discounted return is negative, it is an implementation heuristic. 

\vspace{-0.4cm}
\subsection{Optimality, Complexity, and Implementation}
The proposed framework learns a feasible suboptimal policy that improves the empirical return-per-bit while satisfying the bit-budget constraint, thus providing an approximate solution to the return-per-bit maximization problem in \eqref{eq:P1_ratio_obj_bit}. 
Let $\mathcal{C}_{\rm net}$ be the average computational cost of one neural-network forward/backward step for the world model, selector, and AC modules. 
For each training step, world-model learning over a sampled sequence of length $L$ requires $\mathcal{O}(L\mathcal{C}_{\rm net})$, AC learning over imagination of horizon $H$ requires $\mathcal{O}(H\mathcal{C}_{\rm net})$, and CIV-guided reverse pruning over $N$ candidate tokens requires $\mathcal{O}(N^2H\mathcal{C}_{\rm net})$ because counterfactual rollouts are evaluated for different token-removal decisions. 
Thus, the total training complexity over $I$ iterations is $\mathcal{O}(I(L+H+N^2H)\mathcal{C}_{\rm net})$. Let $\mathcal{C}_{\rm inf}$ be the one-step forward inference cost per time slot. 
The online complexity is $\mathcal{O}(\mathcal{C}_{\rm inf}+N\log N)$ per slot, which has near-linear complexity and can support real-time online inference for physical AI systems.

The proposed WM-CDT can be deployed in an edge-assisted cellular architecture, where the physical AI agent operates as a wireless user equipment (UE) and the digital twin is hosted at an edge server co-located with the serving base station. 
The physical agent needs to only execute lightweight on-device modules, including sensing, semantic encoding, token selection, and transmission, whereas computationally intensive modules, such as RSSM training, CIV-based counterfactual evaluation, and actor-critic policy optimization, are performed at the edge server. 
This separation follows a CTDE principle. During centralized training, the edge server builds the replay buffer $\mathcal{D}$ and requires access to the full semantic token set $\boldsymbol{z}_t$ at each training slot. 
Hence, the training-stage uplink overhead scales with the full candidate-token description, i.e., $\sum_{n=1}^{N}\ell_{t,n}$ bits per slot, together with low-dimensional records such as the executed action $\boldsymbol{a}_t$, reward $r_t$, and wireless reception outcome. 
For the simulation setting with $N=32$ candidate tokens and $8$ bits per token, this corresponds to $256$ bits transmission cost per slot. 
This overhead is incurred only during centralized training. 

During online execution, at each slot, given available bit budget $U_t$, the local selector $\pi_{\varpi}^{1}$ chooses a subset $\mathcal{B}_t$ of semantic tokens whose encoded packet length satisfies $\ell(c_t)\le U_t$. 
The selected tokens are then quantized by a shared semantic codebook~\cite{11366555}, channel-coded, and transmitted over the cellular network's uplink to the edge server. 
After physical-layer decoding, the base station forwards the recovered semantic tokens $\boldsymbol{y}_t$ to the edge server-hosted digital twin, which updates the latent belief using $\Psi_{\phi}$, $q_{\phi}$, and $p_{\phi}$, computes the control action using $\pi_{\theta}^{2}$, and returns only a low-dimensional control command through the downlink control/data channel. 
Thus, the online system requires no real-time counterfactual rollout on the agent and introduces only one semantic-packet uplink transmission and one compact control downlink feedback per closed-loop decision slot. 
The overall training procedure is summarized in Algorithm~\ref{alg:overall_training}.

\setlength{\textfloatsep}{1pt} 
\setlength{\intextsep}{1pt} 
\begin{algorithm}[t!]
\caption{Training of the Proposed WM-CDT Framework}
\label{alg:overall_training}
\scriptsize
\begin{algorithmic}[1]
\REQUIRE Initialize model parameters $\omega$, $\varpi$, $\phi$, $\theta$, $\psi$, and $\alpha$

\item[] \textcolor{gray}{// Stage 1: Self-supervised semantic encoder pretraining}
\FOR{each local transition $(\boldsymbol{o}_t,\boldsymbol{a}_t,\boldsymbol{o}_{t+1})$}
    \STATE Update $\omega$ by minimizing the TD loss
\ENDFOR

\item[] \textcolor{gray}{// Stage 2: Training of digital twin, control, and selector}

\FOR{training iteration $i=1,2,\ldots,I$}

\item[] \textcolor{gray}{// Real Environment Interaction \& Data Collection}
    \FOR{each environment step $t$}
        \STATE Observe local sensing data $\boldsymbol{o}_t$ at the physical agent
        \STATE Extract candidate semantic tokens using $f_{\omega}(\cdot)$
        \STATE Form a transmitted token subset $\mathcal{B}_t$ by using $\pi_{\varpi}^{1}$
        \STATE Encode the selected tokens into message $c_t$ and transmit it over the wireless uplink
        \STATE Decode semantic tokens $\boldsymbol{y}_t$ at the decision maker
        \STATE Update the latent belief by using $\Psi_{\phi}$, $q_{\phi}$ and $p_{\phi}$
        \STATE Select the control action by using $\pi_{\theta}^{2}$
        \STATE Feed the control action back to the physical agent
        \STATE Store $\left(\boldsymbol{o}_t,\boldsymbol{y}_t,\boldsymbol{a}_t,r_t\right)$ to the replay buffer $\mathcal{D}$
    \ENDFOR

	\STATE Sample sequences $\{(\boldsymbol{o}_t,\boldsymbol{y}_t,\boldsymbol{a}_t,r_t)\}_{t=k}^{k+L} \sim \mathcal{D}$.
  \STATE Update $\omega$ and $\chi$ by minimizing $\mathcal{L}_{\mathrm{enc}}(\omega,\chi)$
  \STATE Update the world model $\phi$ by maximizing $\mathcal{L}_{\mathrm{wm}}(\phi)$
	\item[] \textcolor{gray}{// Training of Control and Selection by imagination}
	\STATE Compute the optimal subset $\mathcal{B}_{t}^{*}$ by counterfactual reasoning from $z_t=f_{\omega}(\boldsymbol{o}_t)$ (Imagination over $\{p_{t,n}\}$)
	\STATE Update the selector by minimizing $\mathcal{L}_{\mathrm{sel}}(\varpi)$
	\STATE Imagine $\{(\tilde{\boldsymbol{u}}_\tau, a_\tau)\}_{\tau=t}^{t+H}$ over actions from the AC
    \STATE Compute $\lambda$-return targets, update critic $\psi$ by using $\mathcal{L}_{\mathrm{critic}}$ and actor $\theta$ by using $\mathcal{L}_{\mathrm{actor}}$, and update $\alpha$
\ENDFOR
\end{algorithmic}
\end{algorithm}

\begin{figure}[t!]
  \centering
  \includegraphics[width=0.9
  \linewidth]{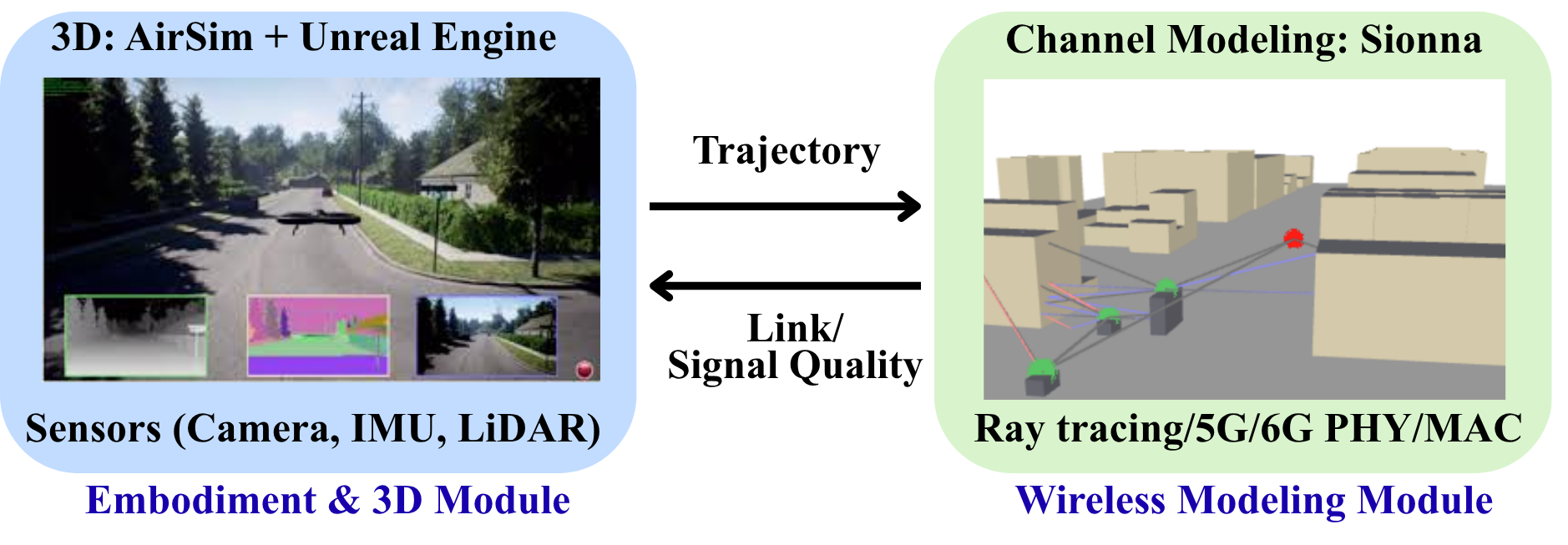}
  \vspace{-0.4cm}
  \caption{The proposed AirSim-Sionna-based simulator for semantic communications in closed-loop physical AI systems.}
  \label{fig:sim}
\end{figure}

\vspace{-0.3cm}
\section{Simulation Results and Analysis}
\vspace{-0.1cm}
\subsection{AirSim-Sionna-Based Realistic Simulator}
\vspace{-0.1cm}
To evaluate semantic communications in closed-loop physical AI systems under realistic interactions, we develop a novel AirSim-Sionna-based simulator that jointly models physical agent dynamics, visual sensing, semantic transmission, wireless channel effects, remote inference, and control, as shown in Fig. 2. The proposed simulator is designed to reproduce the sensing-communication-inference-control loop considered in our system whereby a physical agent interacts with a three-dimensional environment, extracts task-relevant semantic tokens from local observations, transmits a selected subset of tokens over a wireless uplink, and receives control commands generated by an edge server. 

The simulator consists of two coupled modules. The first module is the physical environment module, implemented in AirSim \cite{shah2017airsim}. AirSim provides a high-fidelity three-dimensional environment in which the physical agent moves according to realistic kinematics. The second module is the Sionna-based wireless communication module \cite{sionna}. Particularly, the Sionna-based channel module simulates the uplink transmission from the physical agent to the edge server. In particular, the channel state is generated according to the relative geometry between the mobile agent and the receiver, together with propagation effects such as path loss, fading, multipath propagation, and additive noise.  The AirSim and Sionna modules are synchronized at the time-slot level. Each simulation step corresponds to one closed-loop interaction round. Particularly, the simulator first builds the physical environment in AirSim, where the agent collects the local observation, extracts candidate semantic tokens, and transmits selected tokens. The selected message is then passed through the Sionna wireless channel, decoded at the edge server, and used to update the digital twin. Finally, the control policy generates the control command, which is applied to the AirSim agent in the next physical step.

\vspace{-0.4cm}
\subsection{Parameter and System Settings}
We consider a wireless semantic UAV navigation task in the AirSim-Sionna-based simulator. The UAV flies in a $200\,\mathrm{m}\times 200\,\mathrm{m}$ three-dimensional environment with randomly placed obstacles. For each episode, the UAV starts from the center of the environment, and the destination is randomly generated on a circle with radius $70\,\mathrm{m}$. An episode is successful if the UAV reaches the destination when it is within a $5\,\mathrm{m}$ radius of the target location, and it fails if a collision with obstacles occurs. If neither success nor collision occurs, the episode is truncated at the maximum horizon $T=200$.
At each time slot, the UAV observes an RGB image and extracts $N=32$ candidate semantic tokens. Each semantic token is represented by a $64$-dimensional continuous embedding, which is quantized to $8$ bits by using a codebook-based vector quantizer before transmission~\cite{11366555}. Hence, the communication cost of the selected semantic message is
$\ell(c_t)=8|\mathcal{B}_t|$, where $\mathcal{B}_t$ is the selected token subset. The selected tokens must satisfy the per-slot bit-budget constraint $\ell(c_t)\leq U_t$, where $U_t\in\{64,96,128,160,192\}$ bits/slot. The convergence experiment uses $U_t=96$ bits/slot. The remote server generates the continuous UAV control command
$\boldsymbol{a}_t=[v_t^{\mathrm{f}}, v_t^{\mathrm{z}}, \omega_t]$, where
$v_t^{\mathrm{f}}$ is the forward velocity, $v_t^{\mathrm{z}}$ is the vertical velocity, and $\omega_t$ is the steering angular velocity. The task reward is defined as
$
    r_t = d_{t-1}-d_t + R_{\rm succ}\mathbb{I}_{\rm succ}
    - R_{\rm col}\mathbb{I}_{\rm col} - \beta_r,
$
where $d_t$ is the distance between the UAV and the target, and $\mathbb{I}_{\rm succ}$ and $\mathbb{I}_{\rm col}$ indicate successful arrival and collision, respectively. We set $R_{\rm succ}=10$, $R_{\rm col}=10$, and $\beta_r=0.01$.
All training hyperparameters are summarized in Table \ref{tab:system_params}.

We compare the proposed WM-CDT scheme with four baselines. 
First, AC-RRL \cite{gc} is a model-free recurrent AC baseline that uses the same TD-pretrained semantic encoder and budget-aware semantic token selector as WM-CDT, but it neither learns a world model nor performs counterfactual reasoning. Instead, it directly optimizes semantic token selection and UAV control from real AirSim-Sionna interactions using the Dinkelbach-shaped reward.
Second, MBPO is a model-based RL baseline that learns a latent dynamics model and uses short imagined rollouts for policy optimization. 
Third, AC \cite{a1} is a feedforward AC baseline. It generates control actions based only on the currently received semantic tokens, without accumulating historical semantic information.
Fourth, PPO is a model-free RL baseline that updates the semantic selection and control policies using the clipped PPO objective under the same semantic bit-budget constraint.

\begin{table}[t]
\centering
\caption{Simulation parameters and learning hyperparameters.}
\label{tab:system_params}
\begin{tabular}{lc}
\toprule
\textbf{Variable} & \textbf{Value / Setting} \\
\midrule
Environment size & $200\,\mathrm{m}\times 200\,\mathrm{m}$ \\
Success radius & $5\,\mathrm{m}$ \\
Maximum horizon $T$ & 200 \\
Control action & $[v_t^{\mathrm{f}}, v_t^{\mathrm{z}}, \omega_t]$ \\
Number of semantic tokens & $N=32$ \\
Token embedding dimension & $64$ \\
Quantization bits per token & $8$ bits \\
Bit budget $U_t$ (bits/slot) & $\{64,96,128,160,192\}$ \\
Success reward $R_{\rm succ}$ & $10$ \\
Collision penalty $R_{\rm col}$ & $10$ \\
Step penalty $\beta_r$ & $0.01$ \\
Discount factor $\gamma$ & $0.99$ \\
GAE parameter $\lambda$ & $0.95$ \\
Trajectory length $L$ & $32$ \\
Hidden layer dimension & $128$ \\
Optimizer & Adam \\
Learning rate & $3\times10^{-4}$ \\
Batch size & $128$ \\
Replay buffer size & $5\times10^4$ \\
Encoder pretraining samples & $5\times10^4$ \\
PPO clipping coefficient & $0.2$ \\
Rollout horizon & $10$ \\
Training steps & $1\times10^6$ \\
Random seeds & $10$ \\
\bottomrule
\end{tabular}%
\end{table}

\vspace{-0.5cm}
\subsection{Convergence Performance}
\begin{figure}[t!]
  \centering
  \includegraphics[width=0.9\linewidth]{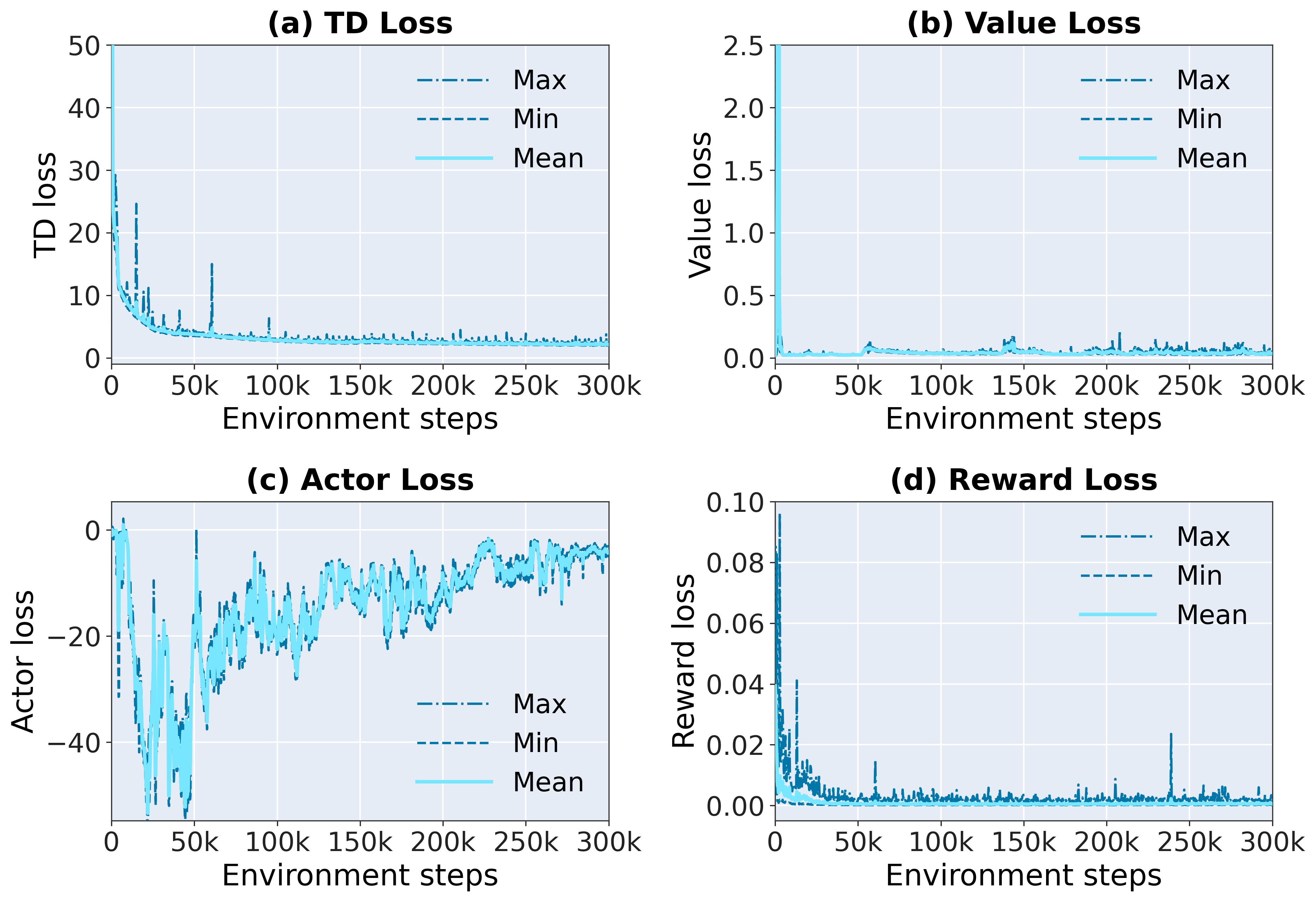}
\vspace{-0.4cm}
  \caption{The training losses of different components in WM-CDT.}
  \label{fig:conv2}
\end{figure}

\begin{figure}[t!]
  \centering
    \vspace{-0.3cm}
  \includegraphics[width=0.8\linewidth]{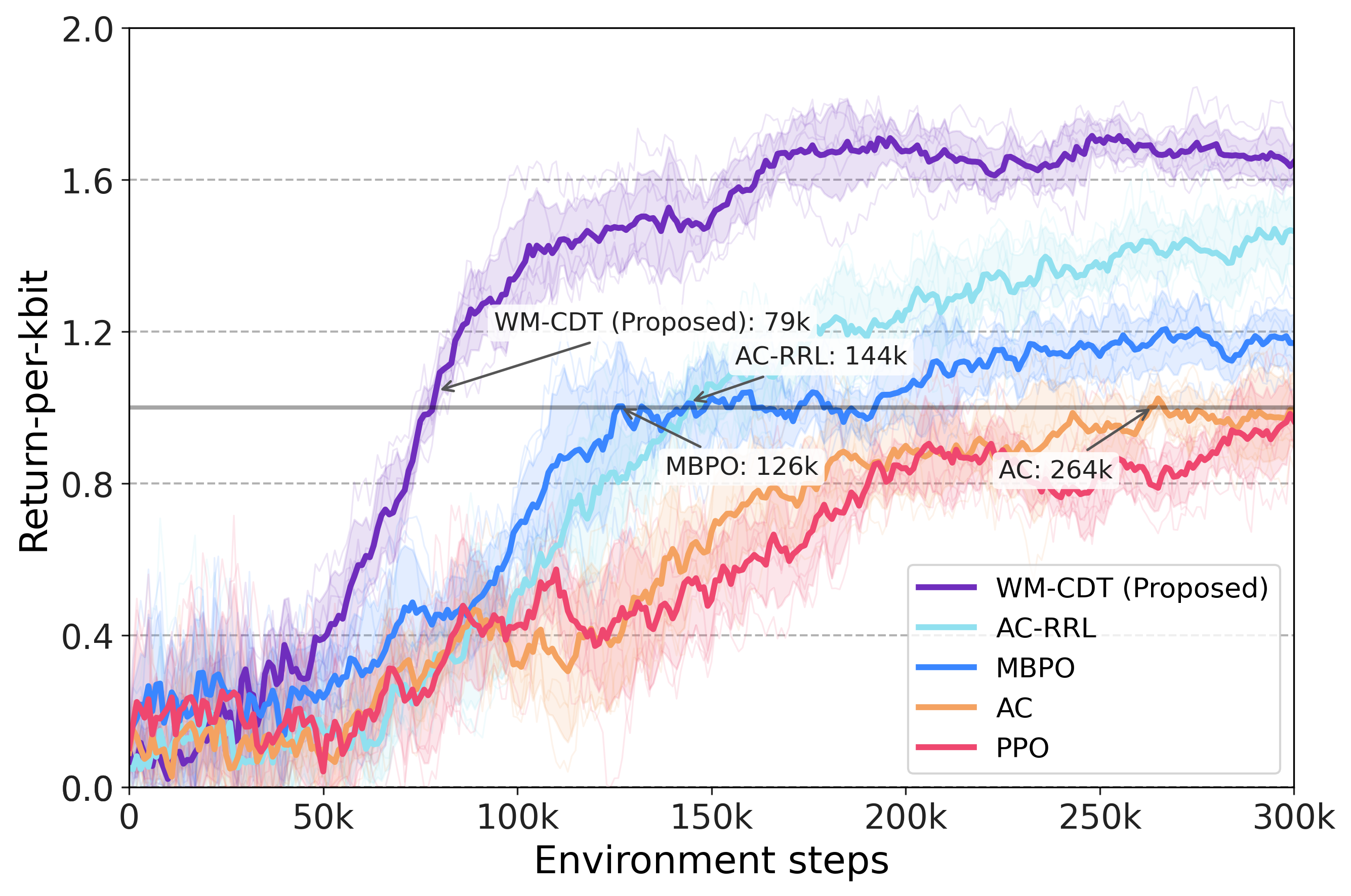}
  \vspace{-0.5cm}
  \caption{The convergence of different methods with limited environment steps.}
  \label{fig:conv}
\end{figure}

Fig.~\ref{fig:conv2} shows the training losses of the proposed WM-CDT approach. From Fig.~\ref{fig:conv2}, we can observe that the TD loss decreases rapidly at the beginning and then gradually converges, which indicates that the semantic encoder can learn temporally predictive token representations from UAV observations. The value loss and reward loss remain small after the initial training stage, which shows that the learned latent dynamics and reward prediction become stable. Meanwhile, the actor loss first fluctuates due to exploration and then becomes more stable as the policy improves. Fig.~\ref{fig:conv2} shows that the world model can be jointly trained toward convergence in a stable manner under limited interactions. 

Fig.~\ref{fig:conv} compares the convergence to a final return-per-kbit for different approaches under limited environment steps. From Fig.~\ref{fig:conv}, the proposed WM-CDT scheme reaches the $1$ return-per-kbit level at $79$k environment steps, while MBPO, AC-RRL, and AC reach the same level at $126$k, $144$k, and $264$k steps, respectively. Hence, WM-CDT achieves significant improvements of $37.3\%$, $45.1\%$, and $70.1\%$ in terms of data efficiency compared to MBPO, AC-RRL, and AC, respectively. Moreover, WM-CDT achieves the highest return-per-kbit among all methods in terms of final performance, WM-CDT achieves about $13.8\%$ higher return-per-kbit than AC-RRL, $39.8\%$ higher than MBPO, $66.7\%$ higher than AC, and $71.9\%$ higher than PPO. This improvement comes from two key components. First, the RSSM-based world model enables long-horizon imagined rollouts, which improves planning efficiency compared with purely model-free methods. Second, the counterfactual token valuation mechanism allows WM-CDT to select semantic tokens according to their long-horizon causal contribution rather than their instantaneous relevance.

\vspace{-0.3cm}
\subsection{Computational Complexity}
Table~\ref{tab:complexity} shows the computational overhead and online latency of different approaches. 
The proposed WM-CDT has the largest model size and training time because it jointly maintains the semantic encoder, RSSM-based world model, AC controller, and counterfactual semantic token evaluation module.  As shown in Fig.~\ref{fig:conv}, this additional training overhead leads to return-per-kbit gains and high data efficiency. We note that the training overhead is only incurred offline at the edge server during centralized training. For decentralized online execution, the latency of WM-CDT remains close to that of the model-free baselines because all methods make a single-step decision at each control slot. In particular, after training, the learned encoder and selector are directly deployed at the physical agent, which only needs one forward pass to extract semantic tokens and select the transmitted subset. At the edge server, WM-CDT performs one RSSM belief update using the received semantic tokens and then one actor forward pass to generate the control action. 
Hence, although WM-CDT introduces additional computation during training, its online latency remains close to that of the model-free baselines. 
WM-CDT achieves an online latency of $6.4$ ms/slot, which is only slightly higher than MBPO, AC-RRL, AC, and PPO, whose online latencies are $6.2$, $5.8$, $5.1$, and $5.3$ ms/slot, respectively. 
The small additional latency of WM-CDT mainly comes from the RSSM belief update before the actor generates the control action. 
Overall, Table~\ref{tab:complexity} and Figs.~\ref{fig:conv2}-\ref{fig:conv} show that WM-CDT introduces moderate computational overhead in exchange for improved convergence, higher return-per-kbit, and better long-horizon semantic control.

\begin{table}[t]
\centering
\caption{Computational overhead and inference latency resulting from WM-CDT and Baselines.}
\label{tab:complexity}
\begin{tabular}{lccc}
\toprule
\textbf{Method} 
& \makecell{\textbf{Parameters}\\\textbf{(M)}} 
& \makecell{\textbf{Training Time}\\\textbf{(ms/step)}} 
& \makecell{\textbf{Online Latency}\\\textbf{(ms/slot)}} 
\\
\midrule
WM-CDT   & 8.6  & 28.4 & 6.4 \\
AC-RRL   & 5.9  & 16.8 & 5.8 \\
MBPO     & 7.3  & 27.6 & 6.2 \\
AC       & 5.4  & 14.9 & 5.1 \\
PPO      & 6.6  & 15.7 & 5.3 \\
\bottomrule
\end{tabular}
\end{table}

\vspace{-0.5cm}
\subsection{Task Performance}
\vspace{-0.1cm}

\begin{figure}[t!]
  \centering
 \vspace{-0.3cm}
  \includegraphics[width=0.8\linewidth]{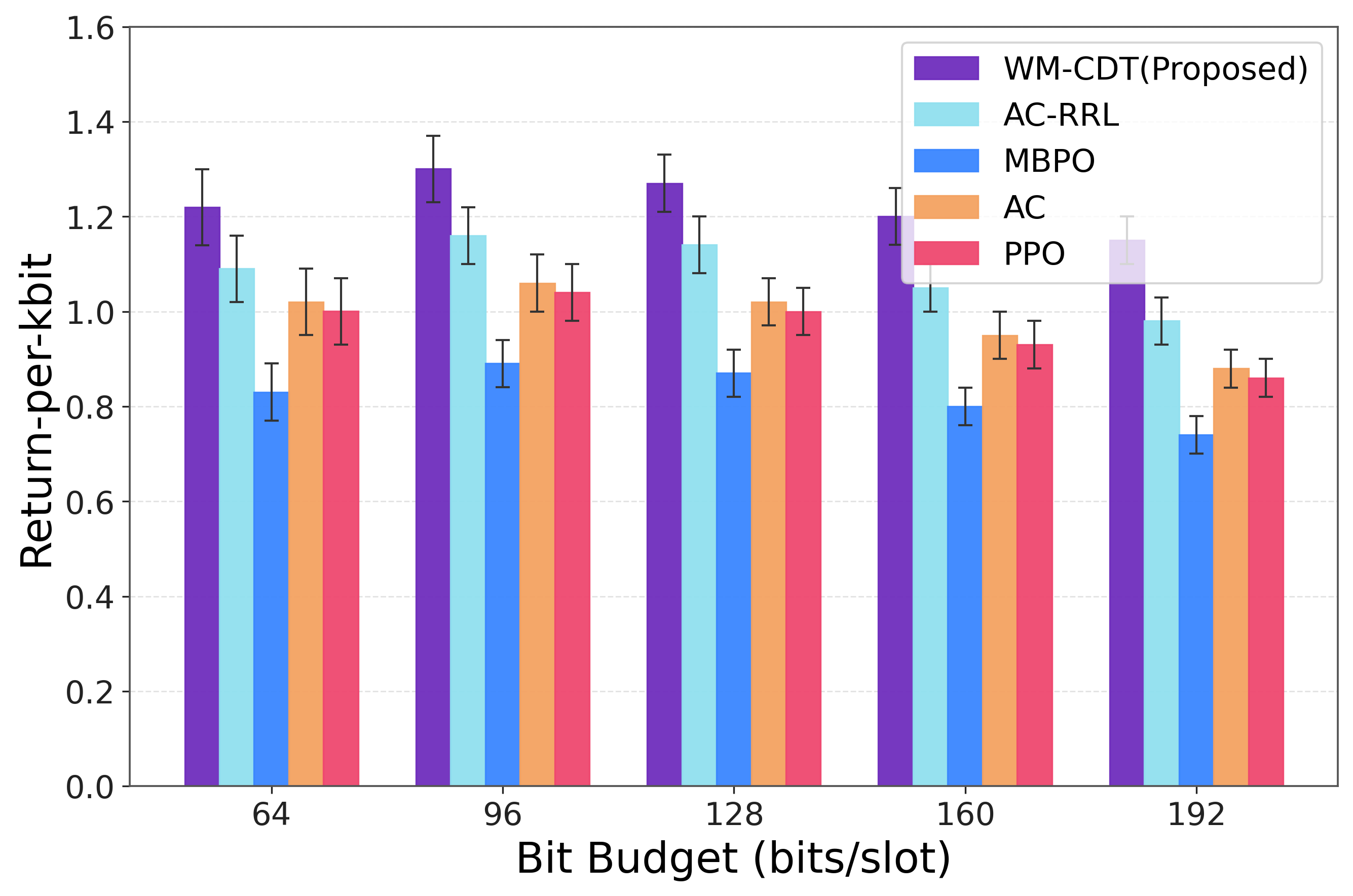}
    \vspace{-0.4cm}
  \caption{The return-per-kbit of different approaches under bit budgets.}
  \label{fig:return}
\end{figure}

\begin{figure}[t!]
  \centering
  \includegraphics[width=0.8\linewidth]{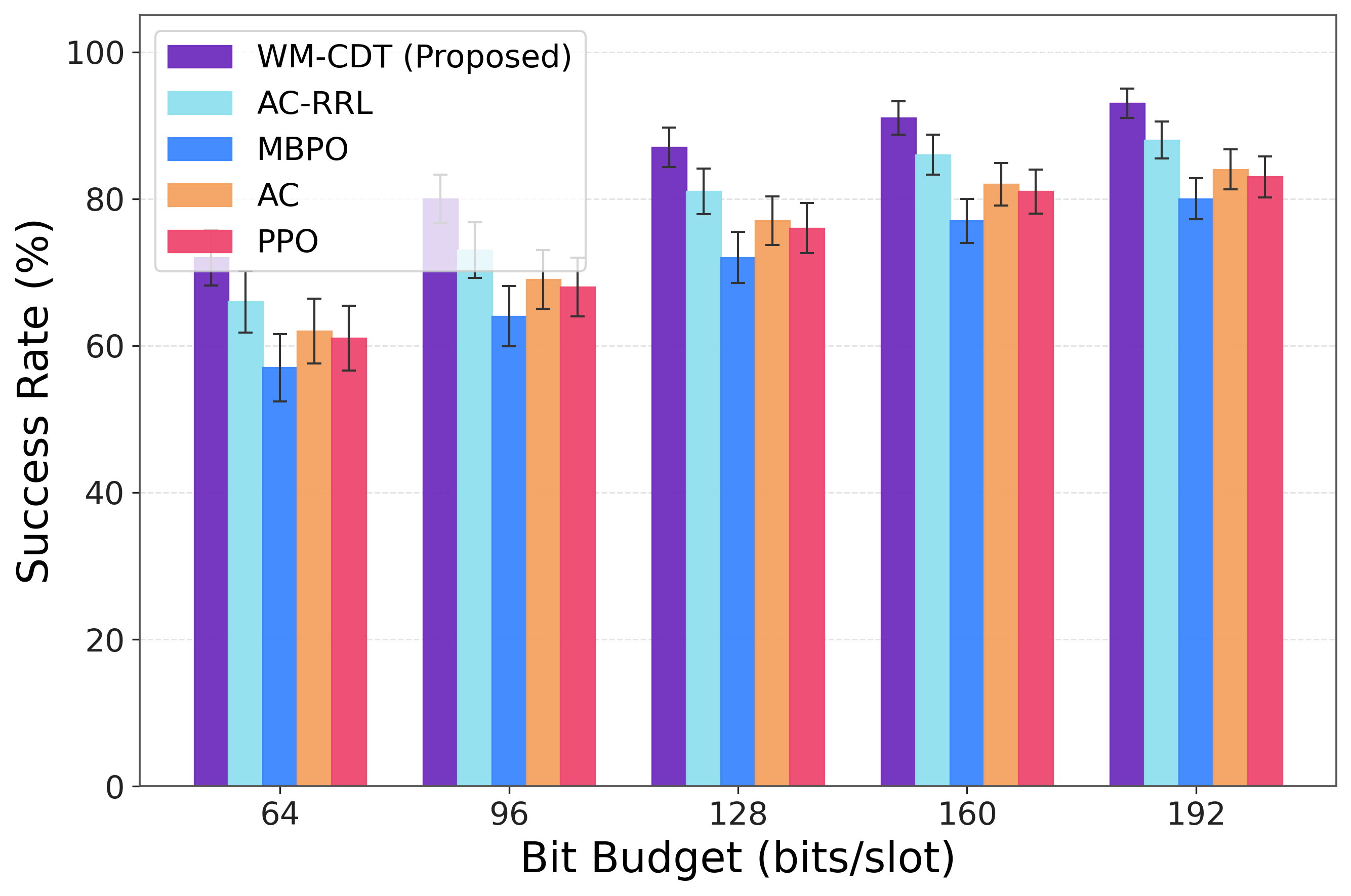}
    \vspace{-0.4cm}
  \caption{The navigation success rate of different approaches under bit budgets.}
      \vspace{-0.3cm}
  \label{fig:success}
\end{figure}

Fig.~\ref{fig:return} shows the return-per-kbit of different methods under different bit budgets, while Fig.~\ref{fig:success} shows the corresponding navigation success rate. From Fig.~\ref{fig:return}, the proposed WM-CDT scheme achieves the highest return-per-kbit across all bit budgets. Particularly, WM-CDT achieves up to $17.3\%$ higher return-per-kbit than AC-RRL, $55.4\%$ higher than MBPO, $30.7\%$ higher than AC, and $33.7\%$ higher than PPO. It shows that the WM-CDT approach can more efficiently convert transmitted semantic tokens into long-horizon task return.
Fig.~\ref{fig:success} further shows that WM-CDT can achieve both high communication efficiency and navigation success rate. As the bit budget increases, the success rate of all methods improves because more semantic tokens can be transmitted to support obstacle awareness and goal-directed control. It is observed that WM-CDT achieves the highest success rate. Compared with AC-RRL, MBPO, AC, and PPO, WM-CDT achieves up to $9.6\%$, $26.3\%$, $16.1\%$, and $18.0\%$ improvement in navigation success rate, respectively, which indicates that WM-CDT improves both semantic communication efficiency and closed-loop task performance.

Figs.~\ref{fig:return} and~\ref{fig:success} jointly show the tradeoff between task performance and communication efficiency. In Fig.~\ref{fig:success}, the success rate increases with the bit budget since more semantic tokens provide the edge server with richer task-related information. In contrast, Fig.~\ref{fig:return} shows that the return-per-kbit reaches its maximum around $96$ bits/slot and then gradually decreases as the bit budget increases. This is because, after the most task-critical tokens have been transmitted, additional tokens provide diminishing long-horizon control benefit while still increasing the communication cost.
It can be also observed that AC-RRL outperforms AC since the recurrent state allows the edge server to accumulate historical semantic information under partial observability. However, AC-RRL still optimizes semantic selection directly from real interactions and cannot explicitly evaluate the counterfactual long-horizon contribution of each token. In contrast, WM-CDT uses the learned world model to perform imagined rollouts and estimate token-level CIV, thereby suppressing redundant tokens and preserving those that are more useful for future control. MBPO also uses model-based rollouts, but it lacks the RSSM-based belief update and counterfactual token valuation of WM-CDT, which limits its performance under coupled dynamics of token selection, wireless transmission, and UAV control.

\begin{figure}[t!]
  \centering
   \vspace{-0.3cm}
  \includegraphics[width=0.85\linewidth]{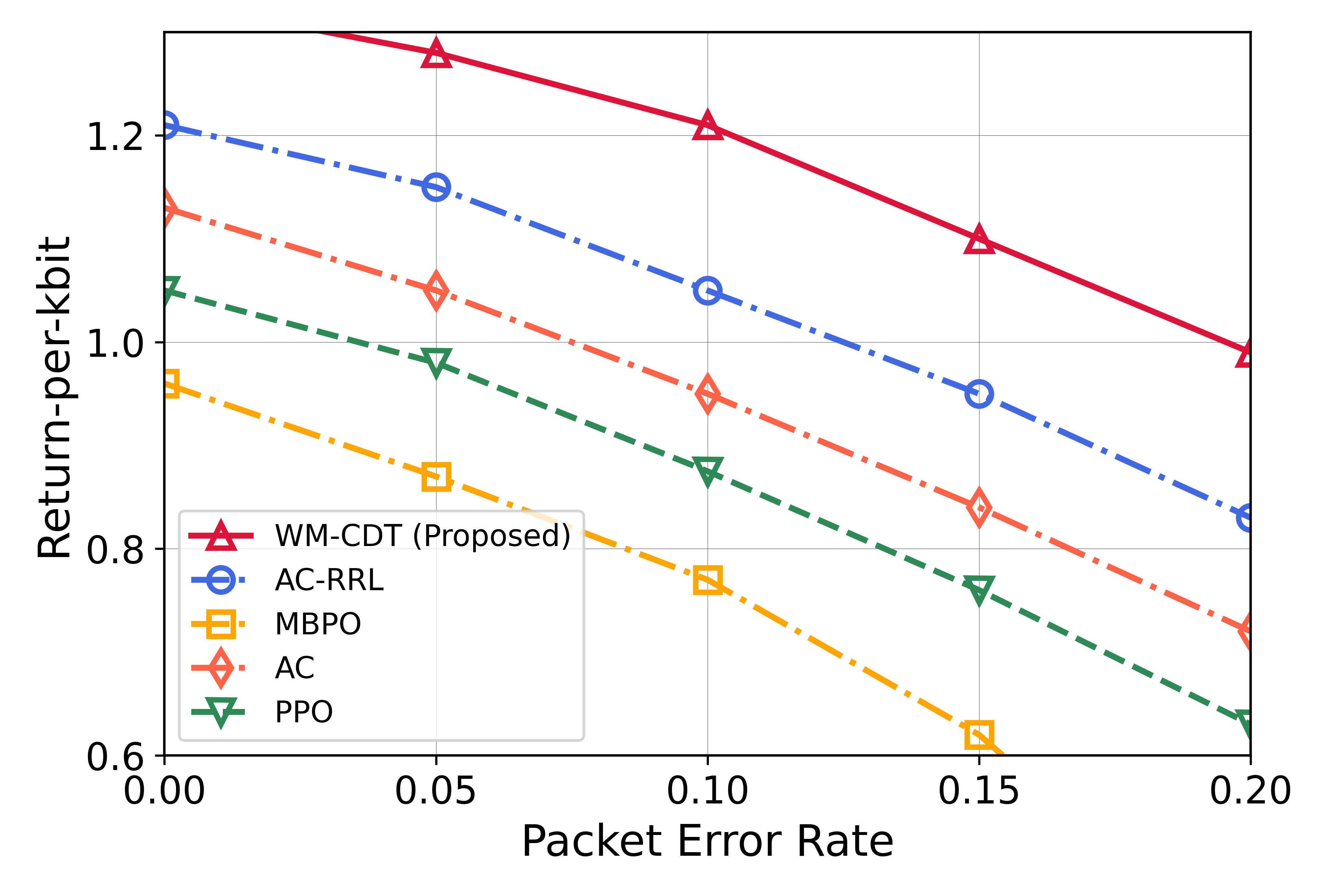}
    \vspace{-0.45cm}
  \caption{Return-per-kbit resulting from the studied schemes under different packet error rates.}
  \label{fig:per}
\end{figure}
Fig.~\ref{fig:per} shows the robustness of different approaches under packet errors. As the packet error rate increases, the return-per-kbit of all methods decreases since some transmitted semantic tokens are corrupted or lost. We can observe that WM-CDT achieves up to $19.6\%$, $77.4\%$, $46.2\%$, and $32.5\%$ improvements in return-per-kbit compared to AC-RRL, MBPO, AC, and PPO, respectively, under $20\%$ packet error rate.
This gain comes from two factors. First, the RSSM-based belief state allows WM-CDT to compensate for missing tokens by using historical semantic information. Second, the CIV-guided selector prioritizes tokens with larger long-horizon causal impact, thus making the transmitted message less redundant and more control-relevant. AC-RRL is more robust than AC and PPO due to its recurrent memory, but it lacks counterfactual token valuation. MBPO also degrades under packet errors because its dynamics model is sensitive to corrupted semantic observations. In contrast, WM-CDT combines belief prediction and causal token selection, thus, it has stronger robustness under unreliable wireless transmission.

\vspace{-0.4cm}
\subsection{CIV Effectiveness}
\vspace{-0.1cm}
\begin{figure}[t!]
  \centering
  \includegraphics[width=0.8\linewidth]{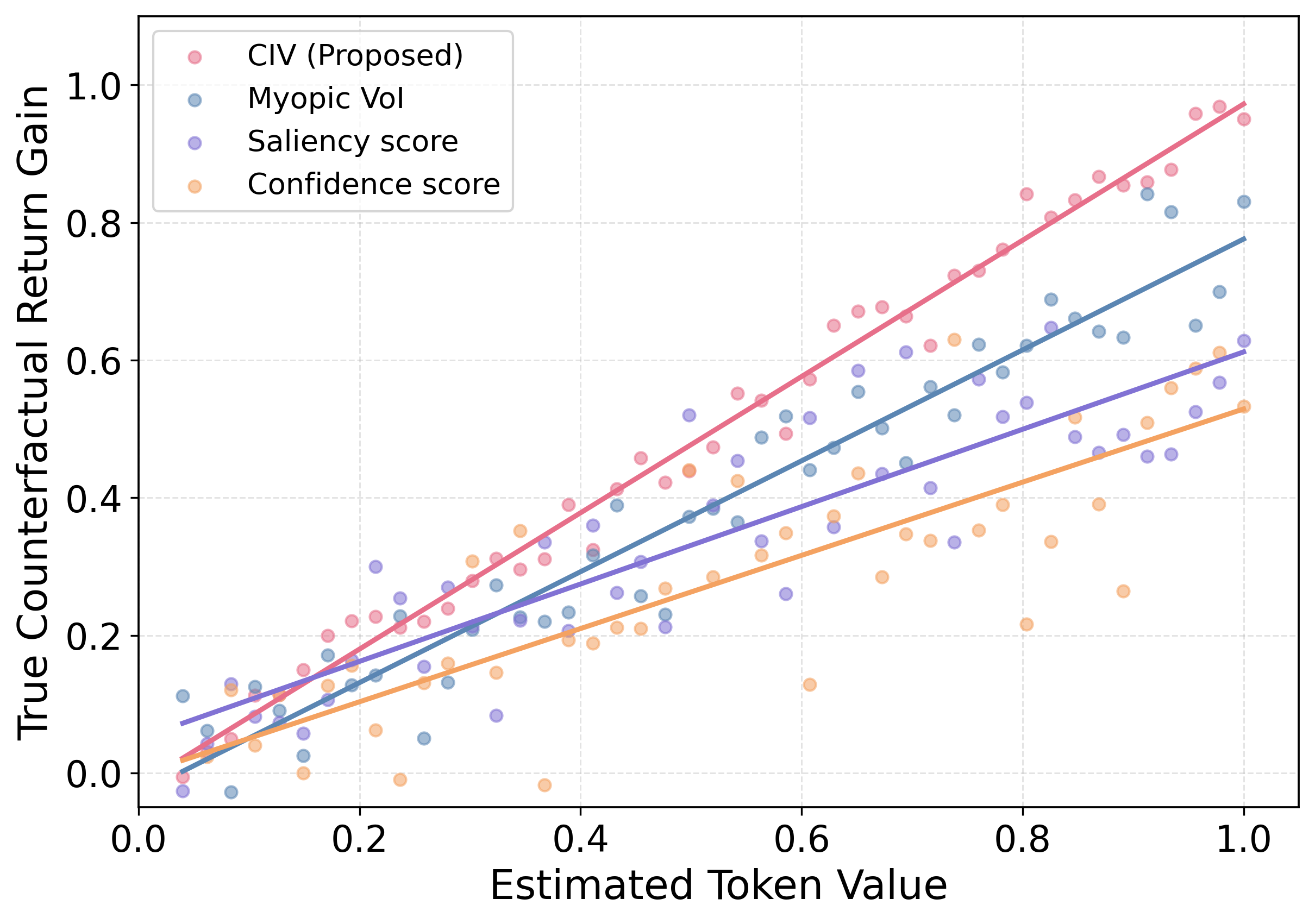}
    \vspace{-0.4cm}
  \caption{Correlation between different token-importance metrics and the true counterfactual return gain.}
  \label{fig:civ}
\end{figure}
Fig.~\ref{fig:civ} compares different token-importance metrics against the true counterfactual return gain. The proposed CIV metric shows the strongest positive correlation, and its fitted line is closest to the ideal monotonic trend. In other words, Fig.~\ref{fig:civ} shows that CIV can better identify semantic tokens that have large long-horizon impact on future control performance.
In contrast, myopic VoI, saliency score, and confidence score show weaker correlations. Myopic VoI captures only short-term value changes, while saliency and confidence mainly reflect instantaneous visual or detection relevance. These metrics may select tokens that appear important at the current time step but have limited effect on long-horizon navigation. Hence, Fig.~\ref{fig:civ} shows the effectiveness of counterfactual CIV estimation for communication-efficient token selection in closed-loop physical AI systems.

\begin{figure}[t!]
  \centering
  \includegraphics[width=1\linewidth]{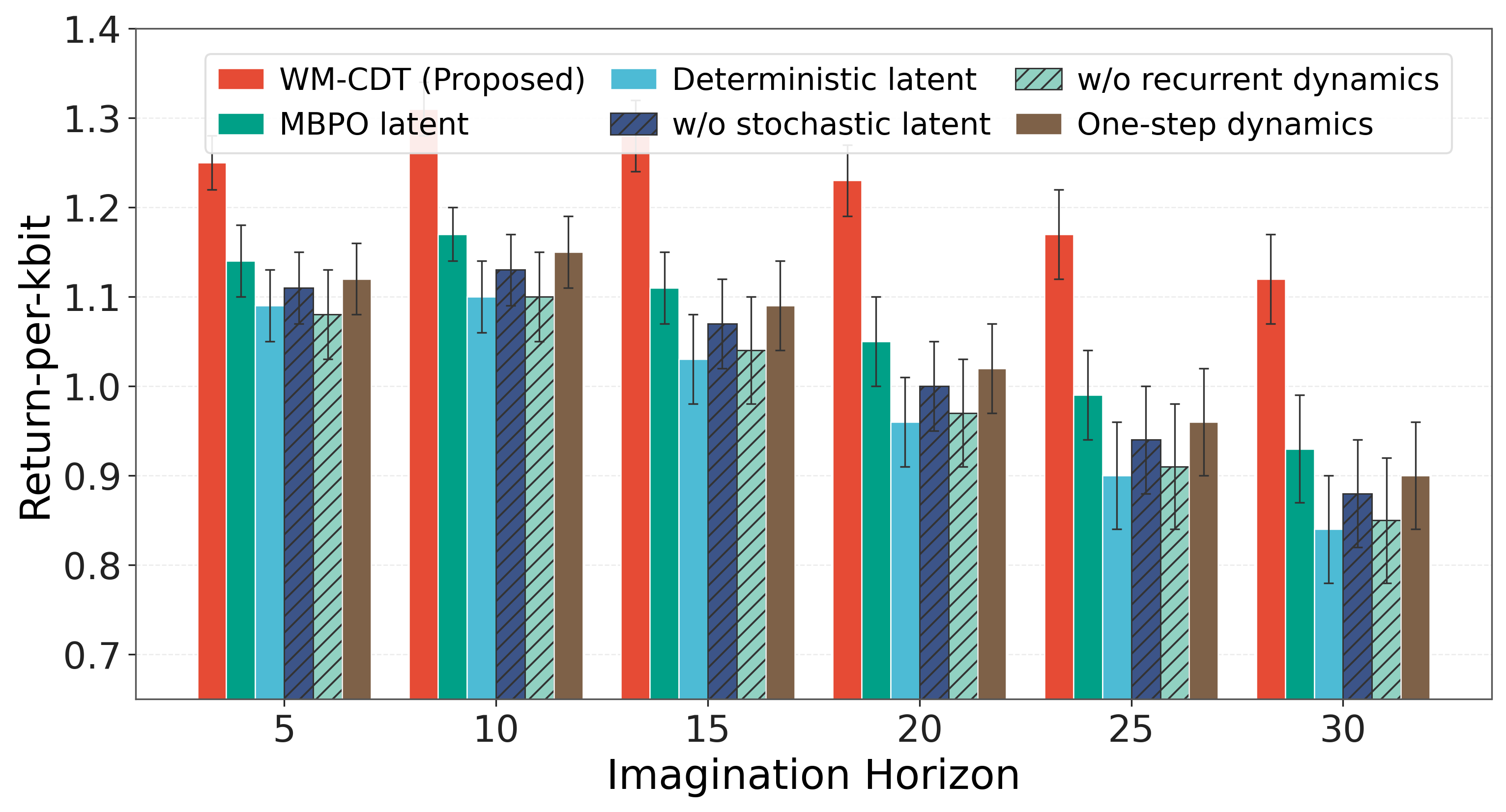}
  \vspace{-0.8cm}
\caption{Ablation study of the latent dynamics design under different imagination horizons.}
  \vspace{-0.2cm}
  \label{fig:dynamics}
\end{figure}

\vspace{-0.4cm}
\subsection{Ablation Studies}
\vspace{-0.1cm}
Figs.~\ref{fig:dynamics}-\ref{fig:selection} evaluate three key components of WM-CDT: latent dynamics modeling, semantic encoder design, and token-selection strategy. For the dynamics ablation in Fig.~\ref{fig:dynamics}, ``MBPO latent'' replaces the RSSM model with a latent dynamics model, ``deterministic latent''  removes probabilistic belief modeling, ``w/o stochastic latent''  removes the stochastic latent variable while keeping the recurrent transition, ``w/o recurrent dynamics'' removes the recurrent state transition, and ``one-step dynamics'' only uses one-step prediction without long-horizon latent rollout. For the encoder ablation in Fig.~\ref{fig:encoder}, ``reconstruction-based'' uses reconstruction-oriented representation learning, ``w/o TD prediction'' removes the TD latent prediction objective, ``w/o control-conditioned'' removes action-conditioned prediction, and ``frozen encoder'' keeps the encoder fixed during training. For the selection ablation in Fig.~\ref{fig:selection}, ``w/o CIV'' removes CIV estimation, ``myopic VoI'' uses only short-term value change, ``saliency Top-$K$'' and ``confidence Top-$K$'' select tokens by instantaneous saliency or confidence, ``random'' selects feasible tokens randomly, ``w/o per-bit norm'' uses CIV without bit normalization, ``w/o reverse pruning'' removes subset-conditioned pruning, and ``fixed Top-$K$'' transmits a fixed number of tokens.

Fig.~\ref{fig:dynamics} studies the impact of the latent dynamics design. The proposed method achieves the highest return-per-kbit over all imagination horizons. At the best horizon, the proposed method achieves about $12.0\%$ higher return-per-kbit than the MBPO-style latent model, and $13.9\%$, $15.9\%$, and $19.1\%$ higher return-per-kbit than the one-step dynamics, w/o stochastic latent, and deterministic latent variants, respectively. These results show that both the stochastic latent state and the recurrent dynamics update are important for modeling the uncertainty and temporal dependency in semantic communication and control. In contrast, one-step dynamics and deterministic latent transitions are less effective in capturing the long-horizon closed-loop evolution.

\begin{figure}[t!]
  \centering
    \vspace{-0.3cm}
  \includegraphics[width=1\linewidth]{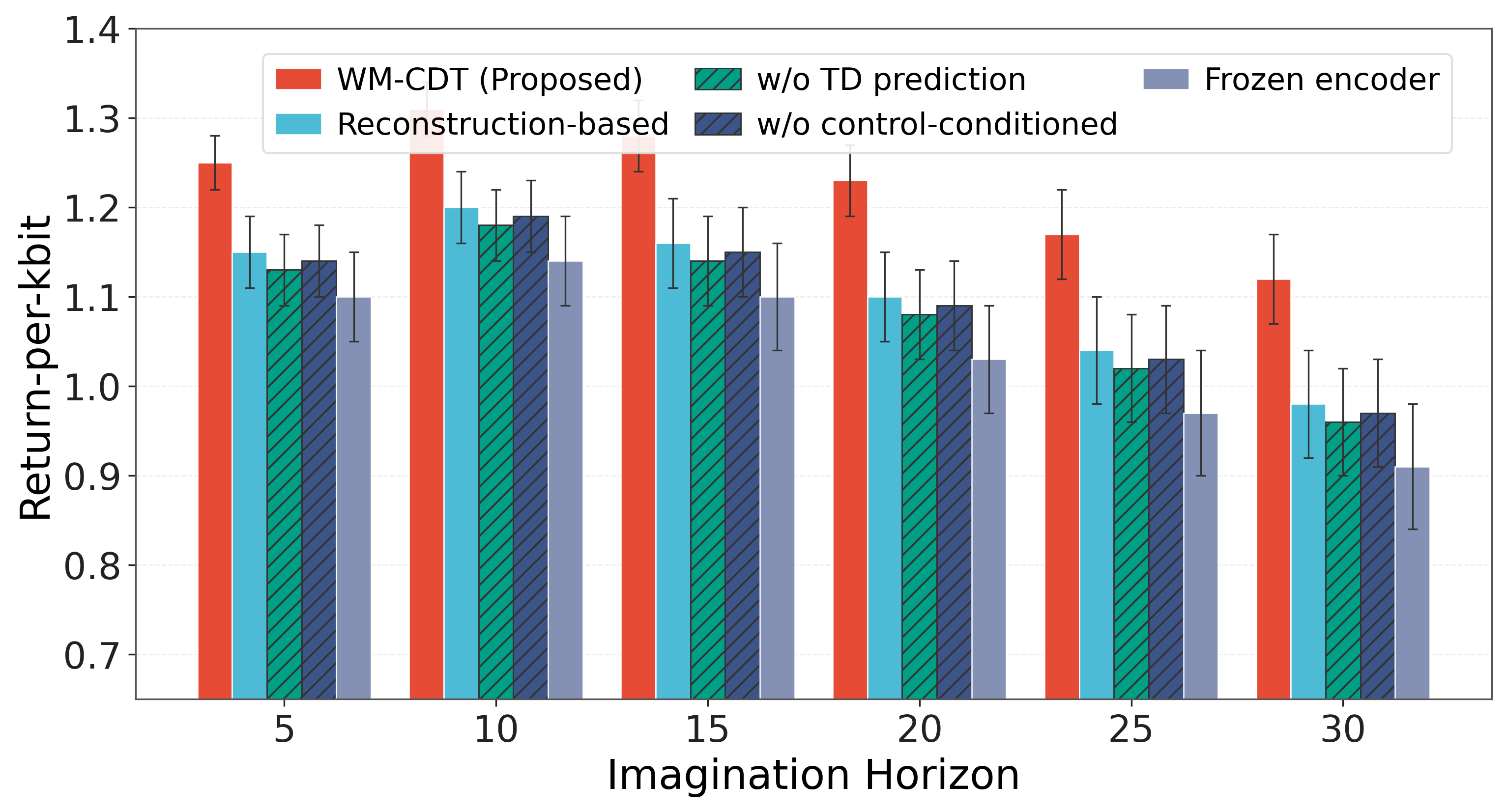}
  \vspace{-0.8cm}
\caption{Ablation study of the semantic encoder design under different imagination horizons.}
  \label{fig:encoder}
\end{figure}

\begin{figure}[t!]
  \centering
    \vspace{-0.3cm}
  \includegraphics[width=1\linewidth]{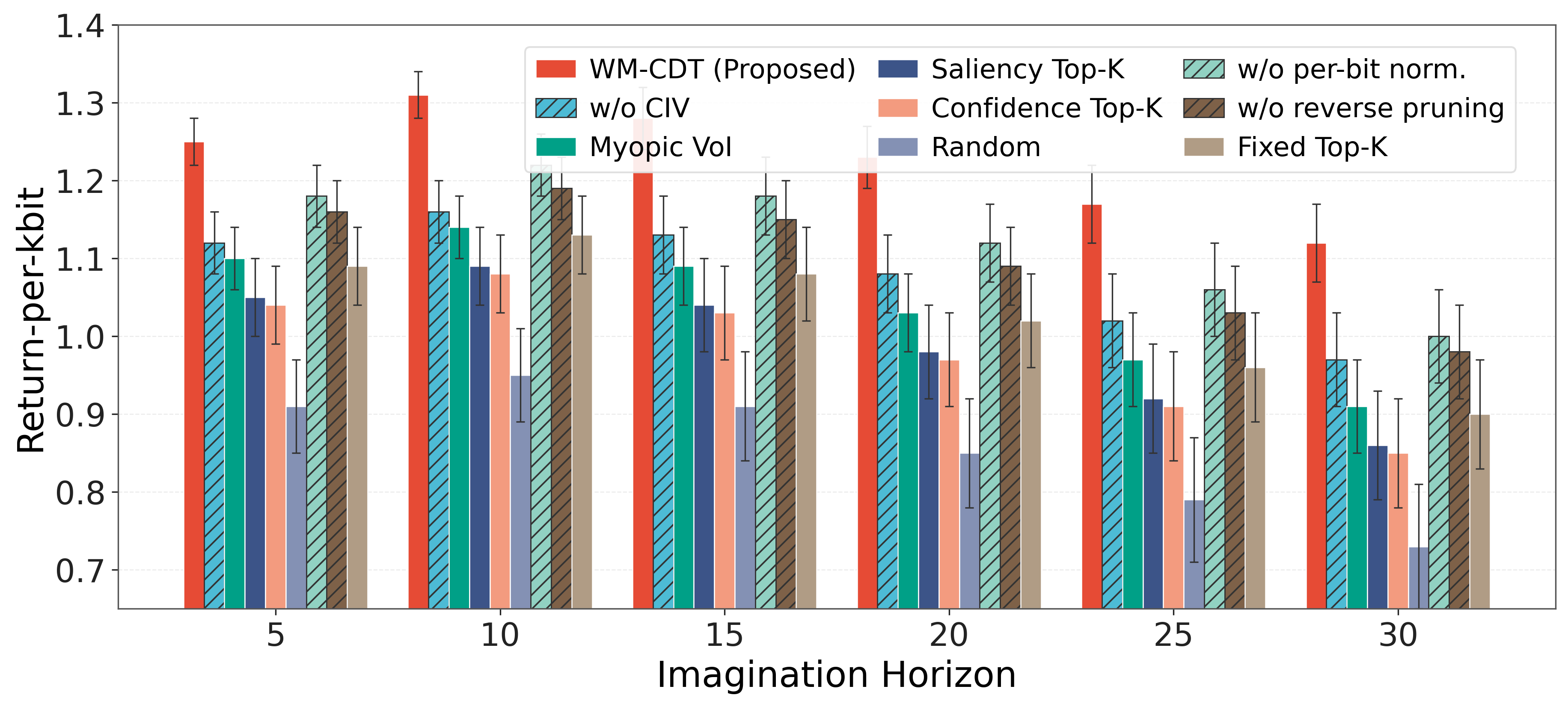}
  \vspace{-0.8cm}
\caption{Ablation study of token-selection strategies under different imagination horizons.}
  \label{fig:selection}
\end{figure}

Fig.~\ref{fig:encoder} evaluates different semantic encoder designs. The proposed TD-based encoder outperforms the reconstruction-based encoder, the w/o TD prediction variant, the w/o control-conditioned variant, and the frozen encoder. At horizon $H=10$, the proposed method improves the return-per-kbit by about $9.2\%$ over the reconstruction-based encoder and by $10.1\%$ over the w/o control-conditioned variant. The gap becomes larger at long horizons, where the proposed encoder achieves up to $23.1\%$ improvement over the frozen encoder. It indicates that temporally predictive and control-aware semantic representations are more suitable than purely reconstruction-oriented or static features for long-horizon semantic control.

In Fig.~\ref{fig:selection}, we show return-per-kbit versus different token-selection mechanisms. The proposed CIV-based selector achieves the highest return-per-kbit across all imagination horizons. At horizon $H=10$, it achieves up to $7.4\%$, $10.1\%$, $12.9\%$, $14.9\%$ improvements in return-per-kbit compared to the w/o per-bit normalization variant,  the w/o reverse pruning variant, the w/o CIV variant, and the myopic VoI baseline. The proposed CIV-based selector also shows significant improvements in return-per-kbit compared to saliency-based, confidence-based, fixed Top-$K$, and random selection. This is because counterfactual CIV estimation of the proposed approach identifies tokens with high long-horizon contribution, per-bit normalization avoids bias toward long tokens, and reverse pruning improves subset refinement under the bit budget. Overall, Figs.~\ref{fig:dynamics}-\ref{fig:selection} show that the gains of the proposed WM-CDT approach come from the joint effect of the RSSM-based dynamics model, the TD-based semantic encoder, and the CIV-guided budget-aware token selector.

\vspace{-0.4cm}
\section{Conclusion}
\vspace{-0.1cm}
In this paper, we have studied goal-oriented semantic communications for closed-loop physical AI systems, where transmitted semantic tokens affect not only current inference but also future belief updates, control actions, and long-horizon task performance. We have formulated the problem as a return-per-bit maximization under wireless bit-budget constraints, thereby jointly capturing task efficiency and communication efficiency. To evaluate the long-horizon utility of semantic tokens, we have introduced a CIV metric based on counterfactual transmission interventions. We then have proposed a WM-CDT framework, which learns an RSSM-based latent model at the edge server, performs imagined rollouts for long-horizon control, and guides semantic token selection through CIV-based counterfactual reasoning.
Extensive simulation results on the AirSim-Sionna-based UAV navigation environment have shown that the proposed WM-CDT  achieves up to $17.3\%$, $55.4\%$, $30.7\%$, and $33.7\%$ higher return-per-kbit than AC-RRL, MBPO, AC, and PPO, respectively, and improves the navigation success rate by up to $9.6\%$, $26.3\%$, $16.1\%$, and $18.0\%$. Moreover, simulation results have also shown that WM-CDT achieves the highest return-per-kbit under unreliable wireless transmission, and counterfactual CIV is more aligned with the true long-horizon return gain than myopic VoI, saliency score, and confidence score.

\vspace{-0.2cm}
\section*{Appendix}
\vspace{-0.1cm}
\section*{Proof of Theorem 1}
For any policy $\pi\in\Pi$ and any scalar $\alpha\in\mathbb{R}$, we define the $\alpha$-priced objective gap $
\Delta_\alpha(\pi)\triangleq R(\pi)-\alpha C(\pi)$,
where $R(\pi)=\mathbb{E}_{\pi}\!\left[\sum_{t=0}^{T}\gamma^{t} r_t\right]$ and
$C(\pi)=\mathbb{E}_{\pi}\!\left[\sum_{t=0}^{T}\gamma^{t} \ell(c_t)\right]$.
By assumption, $C(\pi)>0$ for all $\pi\in\Pi$, and, thus, we can divide by $C(\pi)$ and rewrite $\Delta_\alpha(\pi)$ exactly as
\vspace{-0.2cm}
\begin{equation}
\begin{aligned}
\Delta_\alpha(\pi)
=R(\pi)-\alpha C(\pi)
=C(\pi)\big(\eta(\pi)-\alpha\big),
\label{eq:delta_factor_detailed}
\end{aligned}
\end{equation}
where $\eta(\pi)\triangleq R(\pi)/C(\pi)$. Since $C(\pi)>0$, \eqref{eq:delta_factor_detailed} implies that
\vspace{-0.2cm}
\begin{equation}
\begin{aligned}
&\Delta_\alpha(\pi) > 0\ \Longleftrightarrow\ \eta(\pi)>\alpha,\\
&\Delta_\alpha(\pi) = 0\ \Longleftrightarrow\ \eta(\pi)=\alpha,\\
&\Delta_\alpha(\pi) < 0\ \Longleftrightarrow\ \eta(\pi)<\alpha.
\label{eq:sign_equiv}
\end{aligned}
\end{equation}
We define $F(\alpha)\triangleq \max_{\pi\in\Pi}\Delta_\alpha(\pi)=\max_{\pi\in\Pi}\big(R(\pi)-\alpha C(\pi)\big)$.
We can now prove the theorem statements.

First, we prove the sign of $F(\alpha)$ for $\alpha$ below and above $\eta^*$. Let $\eta^*\triangleq \max_{\pi\in\Pi}\eta(\pi)$. There are two cases.

\emph{Case 1: $\alpha<\eta^*$.} By the definition of maximum, $\alpha<\max_{\pi}\eta(\pi)$ means that there exists at least one policy
$\bar\pi\in\Pi$ such that $\eta(\bar\pi)>\alpha$.
Applying \eqref{eq:sign_equiv} yields $\Delta_\alpha(\bar\pi)>0$.
Since $F(\alpha)$ is the maximum over all $\Delta_\alpha(\pi)$, we have
\vspace{-0.2cm}
\begin{equation}
F(\alpha)=\max_{\pi\in\Pi}\Delta_\alpha(\pi)\ \ge\ \Delta_\alpha(\bar\pi)\ >\ 0.
\label{eq:F_pos}
\end{equation}

\emph{Case 2: $\alpha>\eta^*$.} For every $\pi\in\Pi$, we have $\eta(\pi)\le \eta^*<\alpha$.
Using \eqref{eq:sign_equiv} again gives $\Delta_\alpha(\pi)<0$ for every $\pi\in\Pi$.
Since $F(\alpha)$ is the maximum over all $\Delta_\alpha(\pi)$, we have
\vspace{-0.2cm}
\begin{equation}
F(\alpha)=\max_{\pi\in\Pi}\Delta_\alpha(\pi)\ <\ 0.
\label{eq:F_neg}
\end{equation}
\vspace{-0.4cm}

Then, we prove the value of $F(\alpha)$ at $\alpha=\eta^*$.

\emph{Upper bound.} We first show $F(\eta^*)=0$. Since $\eta(\pi)\le \eta^*$ for all $\pi\in\Pi$, \eqref{eq:sign_equiv} implies
$\Delta_{\eta^*}(\pi)\le 0$ for all $\pi\in\Pi$. Taking the maximum over $\pi$ preserves the inequality:
\vspace{-0.2cm}
\begin{equation}
F(\eta^*)=\max_{\pi\in\Pi}\Delta_{\eta^*}(\pi)\ \le\ 0.
\label{eq:F_le0}
\end{equation}
\vspace{-0.4cm}

\emph{Lower bound.} By the definition of $\eta^*=\max_{\pi\in\Pi}\eta(\pi)$, there exists a policy $\pi^\dagger\in\Pi$
such that $\eta(\pi^\dagger)=\eta^*$.
We substitute $\pi^\dagger$ into \eqref{eq:delta_factor_detailed} at $\alpha=\eta^*$:
$
\Delta_{\eta^*}(\pi^\dagger)=C(\pi^\dagger)\big(\eta(\pi^\dagger)-\eta^*\big)=0.
$
Since $F(\eta^*)$ is a maximum, it dominates each element of the set:
$F(\eta^*)\ge \Delta_{\eta^*}(\pi^\dagger)$. Hence,
\vspace{-0.2cm}
\begin{equation}
F(\eta^*)\ \ge\ \Delta_{\eta^*}(\pi^\dagger)=0.
\label{eq:F_ge0}
\end{equation}
Combining \eqref{eq:F_le0} and \eqref{eq:F_ge0}, then we have $F(\eta^*)=0$.

\emph{Uniqueness.} From \eqref{eq:F_pos} and \eqref{eq:F_neg}, $F(\alpha)>0$ for all $\alpha<\eta^*$ and $F(\alpha)<0$ for all $\alpha>\eta^*$.
Therefore no $\alpha\neq \eta^*$ can satisfy $F(\alpha)=0$, and $\eta^*$ is the unique root.

\emph{Ratio-optimal $\pi^*$.}
Assume $\pi^*$ is ratio-optimal, i.e., $\eta(\pi^*)=\eta^*$.
Substituting $\pi^*$ into \eqref{eq:delta_factor_detailed} with $\alpha=\eta^*$ gives
\vspace{-0.2cm}
\begin{equation}
\Delta_{\eta^*}(\pi^*)=C(\pi^*)\big(\eta(\pi^*)-\eta^*\big)=0.
\label{eq:opt_gap0}
\end{equation}
For any $\pi\in\Pi$, $\eta(\pi)\le \eta^*$ implies $\Delta_{\eta^*}(\pi)\le 0$ by \eqref{eq:sign_equiv}.
Hence $0=\Delta_{\eta^*}(\pi^*)\ge \Delta_{\eta^*}(\pi)$ for all $\pi$, meaning that $\pi^*$ maximizes $R(\pi)-\eta^* C(\pi)$ and attains the maximum value $0$.
Conversely, suppose $\pi^*\in\arg\max_{\pi\in\Pi}\Delta_{\eta^*}(\pi)$ and $\Delta_{\eta^*}(\pi^*)=0$.
Then \eqref{eq:delta_factor_detailed} with $\alpha=\eta^*$ yields
$C(\pi^*)\big(\eta(\pi^*)-\eta^*\big)=0$. Since $C(\pi^*)>0$, we must have $\eta(\pi^*)=\eta^*$,
so $\pi^*$ is ratio-optimal.
Hence, Theorem 1 is proved.

\bibliography{IEEEabrv,reference}

\end{document}